\documentclass[final,authoryear,5p,times]{elsarticle}

\usepackage{amsmath}
\usepackage{amssymb}
\usepackage{mathtools}
\usepackage{amsthm}
\usepackage{booktabs}
\usepackage{array}
\usepackage{siunitx}
\usepackage{subcaption}
\usepackage{tikz}
\usetikzlibrary{arrows.meta, calc, positioning}
\usepackage[protrusion=true,expansion=true]{microtype}
\usepackage[hidelinks]{hyperref}

\newcommand{\R}{\mathbb{R}}
\DeclareMathOperator{\diag}{diag}

\newcommand{\ind}{\mathbb{I}}
\newcommand{\pp}{\,\mathrm{pp}}
\theoremstyle{plain}

\newtheorem{corollary}{Corollary}
\newtheorem{proposition}{Proposition}
\theoremstyle{definition}
\newtheorem{definition}{Definition}

\graphicspath{{./Figures/}}
\journal{Control Engineering Practice}

\begin{document}

\begin{frontmatter}

\title{Real-Time Rulebook-Aware Nonlinear MPC for Autonomous Driving\\ with Priority-Biased Tiered Slacks\tnoteref{disclaimer}}
\tnotetext[disclaimer]{This paper represents research activities conducted within TORC Robotics LLC. It does not describe any production system, does not constitute the safety case for any Daimler Truck AG product or development program, and has not been reviewed by Daimler Truck AG for regulatory compliance or product liability purposes. The research was conducted using public benchmark data and does not reflect the proprietary safety architecture of any deployed or in-development TORC Robotics or Daimler Truck product.}
\author[inst1]{Hadi Hajieghrary}
\ead{Hadi.Hajieghrary@Torc.ai}
\author[inst1]{Benedikt Walter}
\ead{Benedikt.Walter@Torc.ai}
\author[inst1]{Chaitanya Shinde}
\ead{Chaitanya.Shinde@Torc.ai}
\author[inst2]{Paul Schmitt}
\ead{pauls@massrobotics.org}
\author[inst1]{Miguel Hurtado}
\ead{Miguel.Hurtado@Torc.ai}

\affiliation[inst1]{organization={TORC Robotics LLC, an independent subsidiary of Daimler Truck AG},
            country={USA}}
\affiliation[inst2]{organization={Reynolds \& Moore and MassRobotics},
            country={USA}}

\begin{abstract}
Autonomous-vehicle motion planners must resolve conflicts among safety, regulation, comfort, and efficiency in real time while exposing those decisions for audit. We present W-SQP, a weighted tiered-slack nonlinear model predictive controller (NMPC) that compiles nine driving-rule families into a four-tier shared-slack nonlinear program solved online with CasADi and IPOPT; the name denotes the weighted quadratic slack penalty, not a sequential-quadratic-programming solver. Strongly separated tier penalties bias residual violations toward lower-priority rules while leaving actuation bounds hard. The controller replans from its executed state at $10$\,Hz and records per-rule residuals on every cycle. A $90$\,ms solver-time limit returns an anytime iterate that is projected through the vehicle dynamics before execution; median and maximum observed wall-clock solve times were $28$ and $104$\,ms. We evaluate W-SQP in closed loop on 150 Waymo Open Motion Dataset scenarios in Waymax against reactive and proposal-and-select baselines, and introduce a log-independent protocol that separates safety and regulatory compliance from resemblance to the recorded human trajectory. Under this protocol, W-SQP shows no systematic group-level deficit relative to expert replay on the log-independent safety and regulatory rules, with several localized regressions in the hardest, highest-divergence scenarios. The results characterize W-SQP as an auditable, priority-biased, anytime-capable NMPC prototype rather than a hard-real-time or formally safe controller.
\end{abstract}

\begin{keyword}
Model predictive control \sep Autonomous driving \sep Motion planning \sep Rulebooks \sep Real-time optimization \sep Constraint handling
\end{keyword}

\end{frontmatter}

\section{Introduction}
\label{sec:intro}
We are interested in motion planners for autonomous driving that do more than avoid collisions and reach a goal: planners that resolve a priority-ordered rulebook---safety before regulatory compliance, comfort, and efficiency---at the vehicle's control rate. Such conflicts are routine: a vehicle may briefly cross a lane boundary to pass a stopped obstacle, and braking early for comfort can disrupt following traffic. Planners must also be auditable: after an objective conflict, an engineer should be able to identify which rule was relaxed and by how much~\citep{censi_rulebooks_2019,collin_sotif_rulebooks_2021}. These requirements are difficult to combine. Lexicographic schemes make priorities explicit, and cascaded strict-priority quadratic programs reach kilohertz rates in whole-body control~\citep{kanoun_taskpriority_2011,escande_hqp_2014}; however, when applied to nonconvex driving NMPC, a sequential formulation requires multiple nonlinear-program solves and tier-freezing at every cycle, increasing latency and disrupting warm starts. Learned planners scale well but expose limited per-rule structure for audit~\citep{dauner_2023_pdm,plantf_2023,pluto_2024}, whereas conventional penalty stacks do not directly reveal how conflicts were resolved.

\paragraph{W-SQP.} Our primary contribution is \textbf{W-SQP}, a weighted soft-constrained quadratic-penalty NMPC, also referred to as tiered-slack NMPC. W-SQP compiles nine rule families into a four-tier shared-slack nonlinear program built in CasADi and solved with IPOPT~\citep{casadi_2019,ipopt_2006}. Strongly separated penalties bias residual violations toward comfort and efficiency and away from safety and regulatory rules, while physical actuation bounds remain hard. This scalar penalty provides a priority bias, not lexicographic priority preservation (Sec.~\ref{sec:controller}). W-SQP replans from its executed state at $10$\,Hz; under a $90$\,ms solver-time limit, IPOPT returns its best iterate, which is projected through the dynamics before execution. Each cycle also records normalized per-rule residuals for audit.

\paragraph{Evaluating a non-imitative controller.} Because W-SQP plans from its executed state, it may select a valid path that differs from the recorded human trajectory. Closed-loop benchmarks evaluate executed states against reactive agents~\citep{nuplan_2021,gulino_waymax_2023}, but some catalogues also include path-tracking or route-adherence rules defined relative to the human log. Such rules measure imitation as well as driving quality, and aggregating them with physical and regulatory rules conflates ``drives like the recorded human'' with ``drives well''---a form of proxy failure related to Goodhart's law~\citep{manheim_goodhart_2018}. As a supporting contribution, Section~\ref{sec:fair} classifies metrics by their referenced state and decomposes the aggregate deficit into log-independent compliance and an imitation premium.

Taken together, two gaps emerge: a planning formulation that makes the resolution of competing priorities explicit and auditable while remaining computationally viable at the control rate, and an evaluation methodology that measures driving quality without rewarding resemblance to a single recorded trajectory. In this work, we address both.

\paragraph{Closed-loop study.} We evaluate W-SQP on $150$ WOMD scenarios in Waymax~\citep{gulino_waymax_2023,ettinger_womd_iccv_2021} against reactive and proposal-and-select baselines. Across the sixteen log-independent safety and regulatory rules, W-SQP has a mean difference of $-0.04\pp$ relative to expert replay, with no systematic group-level deficit. The two log-coupled rules account for a much larger mean difference of $-10.1\pp$; Cohen's $d=-0.62$ describes the contrast between these groups. This aggregate result does not imply per-rule parity: non-traversable-surface, speed-limit, and drivable-surface compliance decrease by $2.9$, $2.3$, and $2.1\pp$, respectively, primarily in high-divergence scenarios (Table~\ref{tab:two_deficits}).

The contributions are:
\begin{itemize}
  \item \textbf{W-SQP control architecture.} A rulebook-aware NMPC that maps nine rule families to four shared-slack tiers, retains hard actuation bounds, and logs normalized per-rule residuals every cycle (Sec.~\ref{sec:controller}).
  \item \textbf{Real-time and anytime characterization.} An analysis of NLP size, sparsity, and solver-time limits: $28$\,ms median and $104$\,ms maximum observed wall-clock time, with dynamics projection of the anytime iterate (Sec.~\ref{sec:realtime}).
  \item \textbf{Log-independent evaluation protocol.} A referenced-state taxonomy and imitation-premium decomposition separating driving quality from resemblance to a logged trajectory (Sec.~\ref{sec:fair}).
  \item \textbf{Closed-loop WOMD evaluation.} A $150$-scenario comparison with reactive and proposal-and-select baselines, with robustness controls and separate reporting of localized safety regressions (Sections~\ref{sec:setup} and~\ref{sec:results}).
\end{itemize}

The remainder of the paper is organized as follows. Section~\ref{sec:related} situates W-SQP within rule-aware planning, closed-loop benchmarking, and metric construction. Section~\ref{sec:controller} presents the controller, Section~\ref{sec:realtime} its real-time characterization, and Section~\ref{sec:fair} the log-independent evaluation protocol. Sections~\ref{sec:setup} and~\ref{sec:results} present the closed-loop study, Section~\ref{sec:discussion} discusses practical implications and limitations, and Section~\ref{sec:conclusion} concludes.

\section{Related Work}
\label{sec:related}
Our work sits at the intersection of rule-aware planning and control, closed-loop benchmarking, and compliance-metric construction. We first review the benchmarking literature that motivates the fair-evaluation protocol, then situate W-SQP within rule-aware planning.

\subsection{Closed-loop benchmarking and its metrics}
Closed-loop evaluation is now standard because open-loop log matching cannot capture feedback or interaction~\citep{nuplan_2021,dauner_2023_pdm}. nuPlan provides a large-scale planner benchmark with simulation interfaces~\citep{nuplan_2021}, and Waymax supports efficient JAX-based closed-loop rollouts on WOMD scenarios~\citep{gulino_waymax_2023,ettinger_womd_iccv_2021}. These ecosystems standardize planner rollout, but a bespoke scoring catalog may still combine collision, progress, comfort, and trajectory-matching terms without distinguishing driving quality from resemblance to the logged human. PDM~\citep{dauner_2023_pdm} demonstrates the limitations of open-loop scores; our complementary concern is that log-alignment terms reintroduce an imitation bias even within closed-loop evaluation.

\subsection{Agent realism vs.\ metric construction}
The closest recent work studies how the \emph{reactive agents} in the simulator affect benchmark outcomes. \citet{hagedorn_nuplan_reactive_2025} replace IDM with a learned reactive policy across many nuPlan planners and find that IDM-based simulation overestimates planning performance, with planner-dependent score declines and ranking shifts; \citet{peng_nuplan_r_2025} introduce nuPlan-R with diffusion-based reactive agents and observe that learning-based planners look comparatively better under realism. Both vary the \emph{agent-realism} axis and report aggregate-level effects. We vary a different axis---\emph{metric construction}---and isolate the effect per rule: a specific, identifiable subset of rules (log-alignment proxies) is structurally biased against non-imitative planners, the bias is present even under plain IDM agents, and it can be removed without changing the simulator. Our finding thus partially \emph{explains} the score shifts they report rather than competing with them: as agents become more reactive the human log becomes less followable, and any log-coupled sub-metric penalizes that divergence regardless of driving quality.

\subsection{Goodhart, the imitation gap, and training--evaluation mismatch}
Treating a proxy as a target is the classic setting of Goodhart's law~\citep{manheim_goodhart_2018}: once an imitation surrogate is optimized or scored as if it were the objective, it stops measuring the objective. Log-alignment rules are exactly such surrogates---an expert-replay baseline maximizes them by definition---so aggregating them with physical-compliance rules conflates ``drives like the logged human'' with ``drives well.''

This issue is related to, but distinct from, the \emph{imitation gap}, which can denote an expert--learner discrepancy caused by different observability~\citep{weihs_imitation_gap_2021} or, in driving, a discrepancy between planned motion and its realization by a tracker and vehicle dynamics~\citep{plantf_2023}; neither definition concerns scoring a valid alternative trajectory against a single logged demonstration. IGDrivSim~\citep{grislain_igdrivsim_2025} reports log divergence separately from overlap and off-road metrics but studies behavior-cloning policies rather than isolating reference dependence per rule. CLOVER~\citep{ang_clover_2026} identifies a related training--evaluation mismatch and uses it to improve trajectory generation; we instead analyze the metric construction itself. Causal confusion in imitation learning~\citep{dehaan_causal_confusion_2019} is also distinct: it concerns what a policy learns, whereas our analysis concerns how a policy is evaluated. Our contribution is the referenced-state taxonomy, per-rule empirical decomposition, and scoring recommendation, not the general observation that imitation and rule compliance can diverge.

\subsection{Rule-aware planning and the design of W-SQP}
Rulebooks~\citep{censi_rulebooks_2019} assign each rule a violation scalar and a priority order, with extensions to assurance~\citep{collin_sotif_rulebooks_2021} and optimal control~\citep{xiao_rulebased_oc_2021}. Priority intent is encoded in MPC via lexicographic and hierarchical schemes, $\varepsilon$-constraint methods, or penalty separation~\citep{lai_lexicographic_review_2023,abernethy_2024_lexopt,tercan_2024_tlo,mavrotas_2009,schwenzer_mpc_review_2021}; formal methods provide temporal-logic semantics, monitors, and repair~\citep{maierhofer_interstate_2020,maierhofer_intersection_2022,yamaguchi_rtamt_2024,lin_smt_repair_2024,halder_lexicographic_stl_2023}; control-barrier-function filters~\citep{ames_cbf_tac_2017,ames_cbf_ecc_2019,liu_iterative_dhocbf_2023,allamaa_resafe_2024} and Responsibility-Sensitive Safety (RSS)~\citep{shalevshwartz_rss_2017,hasuo_rss_2022} enforce safety feasibility; and learned planners~\citep{dauner_2023_pdm,plantf_2023} scale well but expose little per-rule structure. Closer to W-SQP, real-time-constrained MPC has been developed specifically for on-road autonomous driving, including collocation-based schemes that embed control-barrier safety constraints while keeping the solve within a closed-loop budget~\citep{allamaa_resafe_2024,liu_iterative_dhocbf_2023}. W-SQP shares its real-time, driving-specific emphasis but differs in its target: rather than certifying a single safety filter, it compiles a \emph{priority-ordered, multi-family} rulebook into a single tiered-slack program and exposes a per-rule audit of the resulting trade-offs.
Against this background, W-SQP uses established weighted-slack NMPC primitives rather than claiming a new planning-theory primitive or a lexicographic priority; its contribution is the integration of these primitives under real-time and auditability constraints. Section~\ref{sec:fair} then evaluates the resulting controller without treating log alignment as a safety or regulatory requirement.

\section{The W-SQP Controller}
\label{sec:controller}
We present \textbf{W-SQP}\footnote{W-SQP abbreviates \emph{weighted soft-constrained quadratic-penalty} NMPC, also called \emph{tiered-slack} NMPC. The nonlinear program is solved with IPOPT~\citep{ipopt_2006}; W-SQP does not use a sequential-quadratic-programming solver.}, a rulebook-aware receding-horizon controller that maps a heterogeneous driving rulebook to one online nonlinear program. It plans from the current executed state, biases soft-constraint relaxation by rule priority, and logs normalized per-rule residuals at each cycle; the constraints are smooth except for one clamped proximity gate (Sec.~\ref{subsec:surrogates}). The four tiers are optimized jointly rather than solved and frozen sequentially, so a sufficiently large lower-priority benefit can be traded off against a higher-priority violation: the formulation provides a penalty bias, not lexicographic priority preservation.

\begin{figure*}[t]
\centering
\begingroup
\definecolor{cBlock}{HTML}{F2F5F9}
\definecolor{cBorder}{HTML}{304D68}
\definecolor{cNLP}{HTML}{E3EDF7}
\definecolor{cSolver}{HTML}{EEE8F5}
\definecolor{cEnv}{HTML}{E8F2EA}
\definecolor{cAudit}{HTML}{7A5728}
\definecolor{cFlow}{HTML}{304D68}
\definecolor{cFB}{HTML}{A33A3A}
\resizebox{\textwidth}{!}{%
\begin{tikzpicture}[
  font=\sffamily\small,
  >={Stealth[length=2.4mm,width=1.8mm]},
  block/.style={
    draw=cBorder, line width=0.7pt, fill=cBlock, rounded corners=2pt,
    align=center, text=cBorder, inner sep=3pt, minimum height=15mm,
    text width=29mm},
  nlp/.style={
    block, fill=cNLP, text width=45mm, minimum height=28mm,
    line width=0.9pt},
  solver/.style={block, fill=cSolver, text width=30mm, minimum height=20mm},
  executor/.style={block, text width=37mm, minimum height=19mm},
  environment/.style={block, fill=cEnv, text width=24mm, minimum height=17mm},
  audit/.style={
    draw=cAudit, fill=cAudit!8, rounded corners=2pt, line width=0.7pt,
    align=center, text=cAudit, text width=50mm, inner sep=4pt},
  flow/.style={->, draw=cFlow, line width=0.9pt},
  feedback/.style={->, draw=cFB, line width=0.9pt},
  auditflow/.style={->, draw=cAudit, line width=0.8pt, dashed},
  label/.style={font=\footnotesize, align=center, fill=white, inner sep=2pt},
]

\node[block] (rulebook) at (0,0)
  {\textbf{Driving rulebook}\\[1.5pt]9 constraint families};
\node[block] (map) at (3.8,0)
  {\textbf{Family-to-tier map}\\[1.5pt]$\pi(i)$: four tiers};
\node[nlp] (nlp) at (8.5,0)
  {\textbf{Four-tier shared-slack NLP}\\[2.5pt]
   $\min\;J_{\mathrm{base}}+J_{\mathrm{dyn}}+J_{\mathrm{slack}}$\\[2pt]
   $x_{k+1}=f(x_k,u_k)$\\[1pt]
   $g_i\le s^{(\pi(i))}$ \; (tiered soft constraints)\\[1pt]
   hard bounds on $a,\delta,v$};
\node[solver] (solver) at (13.2,0)
  {\textbf{IPOPT / MUMPS}\\[1.5pt]
   warm start\\[1pt]
   median $28$\,ms\\[1pt]
   $90$\,ms CPU-time limit};

\draw[flow] (rulebook) -- (map);
\draw[flow] (map) -- (nlp);
\draw[flow] (nlp) -- (solver);

\node[executor, below=9mm of solver] (executor)
  {\textbf{Dynamics projection}\\[1.5pt]
   apply $f(x_0,u_0)$\\[1pt]
   model-consistent next pose};
\node[environment, right=9mm of executor] (environment)
  {\textbf{Waymax /}\\\textbf{vehicle}};

\draw[flow] (solver) -- (executor);
\draw[flow] (executor) -- (environment);

\node[audit, above=8mm of nlp] (audit)
  {\textbf{Per-cycle audit record}\\[1.5pt]
   threshold-normalized residual for every rule};
\draw[auditflow] (nlp.north) -- (audit.south);

\coordinate (fbturn) at ($(environment.south)+(0,-8mm)$);
\draw[feedback]
  (environment.south) -- (fbturn) -| (nlp.south)
  node[pos=0.28, label, text=cFB, below=1mm]
    {executed state $x_{\mathrm{current}}$ at $10$\,Hz};
\end{tikzpicture}%
}
\endgroup
\caption{W-SQP closed-loop architecture. Each cycle, the rulebook and current scene are compiled into differentiable surrogates assigned to four penalty tiers (safety $\succ$ regulatory $\succ$ comfort $\succ$ efficiency). IPOPT/MUMPS solves the multiple-shooting kinematic-bicycle NLP under a $90$\,ms CPU-time limit; the first control is propagated through the vehicle model before execution, normalized per-rule residuals are logged for audit, and the controller re-initializes from its executed state $x_{\mathrm{current}}$.}
\label{fig:cep_architecture}
\end{figure*}

\subsection{Model and decision variables}
The ego vehicle follows a kinematic-bicycle model with state $x_k=[p^x_k,p^y_k,\psi_k,v_k]^\top\in\R^4$ (position, yaw, and speed in the ego-local frame) and input $u_k=[a_k,\delta_k]^\top$ (longitudinal acceleration and steering). Fourth-order Runge--Kutta integration discretizes the continuous dynamics $\dot{x}=f_c(x,u)$ at $\Delta t=0.1$\,s, giving $x_{k+1}=f(x_k,u_k)$ with CasADi-differentiable expressions~\citep{casadi_2019}. At each replanning step, the NMPC optimizes over $N=30$ steps ($3.0$\,s) with decision variables $\mathbf{X}=\{x_0,\dots,x_N\}$, $\mathbf{U}=\{u_0,\dots,u_{N-1}\}$, and $\mathbf{S}$. The initial state $x_0$ is fixed to the current executed ego state rather than a logged pose.

\subsection{Objective}
The cost has three parts, $J=J_{\mathrm{base}}+J_{\mathrm{dyn}}+J_{\mathrm{slack}}$. The base term tracks a low-weight reference and penalizes effort and input rate,
\begin{equation}
J_{\mathrm{base}}=\sum_{k=0}^{N-1}\!\Bigl(\|x_k-x_k^{\mathrm{ref}}\|_Q^2+\|u_k\|_R^2+\|\Delta u_k\|_S^2\Bigr)+\alpha_T\|x_N-x_N^{\mathrm{ref}}\|_Q^2,
\label{eq:Jbase}
\end{equation}
with $\Delta u_k=u_k-u_{k-1}$; the $k{=}0$ terms (the input-rate and jerk penalties, and the jerk constraint below) use $u_{-1}$, the first optimized control of the previous cycle's solve carried in as a parameter under warm start (reset to zero on a solver fallback). The dynamics-shaping term discourages aggressive transients,
\begin{equation}
J_{\mathrm{dyn}}=\sum_{k=0}^{N-1}\!\Bigl(W_{\mathrm{j}}\bigl(\tfrac{a_k-a_{k-1}}{\Delta t}\bigr)^2+W_{\mathrm{a}}\,a_{\mathrm{lat},k}^2\Bigr)+\!\sum_{k=1}^{N-1}\!W_{\dot{\mathrm{a}}}\bigl(\tfrac{a_{\mathrm{lat},k}-a_{\mathrm{lat},k-1}}{\Delta t}\bigr)^2,
\label{eq:Jdyn}
\end{equation}
with $a_{\mathrm{lat},k}=v_k(\psi_{k+1}-\psi_k)/\Delta t$. The reference $x^{\mathrm{ref}}$ is the expert log, used \emph{only} as a low-weight stabilizing signal (Table~\ref{tab:solver_params}); it does not pin the ego to the log, and route adherence enters separately as one soft constraint below. This low weight is what makes W-SQP non-imitative: a representative $2$\,m log deviation costs about as much as a comfort-tier slack, so log alignment sits below the safety and regulatory tiers. ``Non-imitative'' means exactly this subordination, not the absence of any log reference. The slack term encodes priority through squared, weight-separated penalties,
\begin{equation}
J_{\mathrm{slack}}=\sum_{j=1}^{4}\rho_j\sum_{k=0}^{N-1}\bigl(s_k^{(j)}\bigr)^2,\quad \rho_1{\gg}\rho_2{\gg}\rho_3{\gg}\rho_4,
\label{eq:Jslack}
\end{equation}
ordering the four tiers safety $\succ$ regulatory $\succ$ comfort $\succ$ efficiency. This scalarization is a feasibility-preserving relaxation that keeps residual violations low and biased toward low-priority families.

We penalize the slacks quadratically ($\ell_2$) rather than with the \emph{exact} $\ell_1$ penalty of the soft-constrained-MPC literature~\citep{kerrigan_soft_2000,scokaert_feasibility_1999}. The trade-off is deliberate. The $\ell_1$ form has a theoretical advantage: above a finite weight threshold, it satisfies each tier's constraints \emph{exactly} whenever they are feasible, and yields the least violation otherwise. The $\ell_2$ form gives this up---because $\nabla(\rho s^2)\to0$ as $s\to0$, small violations become asymptotically free, and even Tier-1 surrogates settle just below $100\%$ (Table~\ref{tab:per_rule_compliance}). What $\ell_2$ buys in return is numerical: it is strictly convex and $C^2$ in the slacks, it removes the dual degeneracy that $\ell_1$ shows at its exact-penalty threshold, and it produces a better-conditioned barrier Hessian. Across the $28{,}131$ warm-started solves at $10$\,Hz this made IPOPT markedly more robust, which is why we chose it. An $\ell_1$ variant---which would additionally let us check the lexicographic-equivalence conditions per solve from IPOPT's returned multipliers---is a natural refinement for future work.

\begin{figure}[t]
\centering
\begingroup
\definecolor{tierone}{RGB}{184,38,38}
\definecolor{tiertwo}{RGB}{211,105,24}
\definecolor{tierthree}{RGB}{32,112,164}
\definecolor{tierfour}{RGB}{31,112,72}
\resizebox{\columnwidth}{!}{%
\begin{tikzpicture}[
  font=\sffamily\small,
  >={Stealth[length=2.0mm,width=1.5mm]},
  family/.style={
    draw=black!55, fill=black!4, rounded corners=1.5pt, line width=0.55pt,
    minimum width=29mm, minimum height=5.6mm, inner sep=1pt,
    align=center, font=\footnotesize},
  tier/.style={
    rounded corners=2pt, line width=0.8pt, minimum width=38mm,
    minimum height=11.5mm, inner sep=2.5pt, align=center},
  mapping/.style={->, draw=black!52, line width=0.6pt},
  weight/.style={
    rounded corners=1.5pt, line width=0.7pt, minimum width=18mm,
    minimum height=10mm, inner sep=2pt, align=center, font=\footnotesize},
  note/.style={
    draw=black!30, fill=black!3, rounded corners=1.5pt,
    text width=78mm, align=left, inner sep=3.5pt, font=\footnotesize},
]

\node[font=\small\bfseries, align=center] at (0,0.9)
  {Constraint families\\priority high $\downarrow$ low};
\node[font=\small\bfseries, align=center] at (4.9,0.9)
  {Shared-slack tiers};

\node[family] (collision) at (0,0) {collision};
\node[family, below=1.2mm of collision] (edge) {road edge};
\node[family, below=1.2mm of edge] (speed) {speed limit};
\node[family, below=1.2mm of speed] (light) {traffic light};
\node[family, below=1.2mm of light] (route) {route};
\node[family, below=1.2mm of route] (decel) {deceleration};
\node[family, below=1.2mm of decel] (jerk) {jerk};
\node[family, below=1.2mm of jerk] (latacc) {lateral acceleration};
\node[family, below=1.2mm of latacc] (headway) {headway};

\node[tier, draw=tierone, fill=tierone!8, text=tierone!75!black]
  (t1) at (4.9,-0.39)
  {\textbf{Tier 1: Safety}\\[1pt]\footnotesize collision, road edge};
\node[tier, draw=tiertwo, fill=tiertwo!9, text=tiertwo!75!black]
  (t2) at (4.9,-2.10)
  {\textbf{Tier 2: Regulatory}\\[1pt]\footnotesize speed, light, route};
\node[tier, draw=tierthree, fill=tierthree!8, text=tierthree!75!black]
  (t3) at (4.9,-4.20)
  {\textbf{Tier 3: Comfort}\\[1pt]\footnotesize decel., jerk, lat. accel.};
\node[tier, draw=tierfour, fill=tierfour!9, text=tierfour!75!black]
  (t4) at (4.9,-5.60)
  {\textbf{Tier 4: Efficiency}\\[1pt]\footnotesize headway};

\draw[mapping] (collision.east) -- (t1.west);
\draw[mapping] (edge.east) -- (t1.west);
\draw[mapping] (speed.east) -- (t2.west);
\draw[mapping] (light.east) -- (t2.west);
\draw[mapping] (route.east) -- (t2.west);
\draw[mapping] (decel.east) -- (t3.west);
\draw[mapping] (jerk.east) -- (t3.west);
\draw[mapping] (latacc.east) -- (t3.west);
\draw[mapping] (headway.east) -- (t4.west);

\node[font=\small\bfseries, anchor=west] at (-1.45,-6.50)
  {Quadratic penalty weights};
\node[weight, draw=tierone, fill=tierone!10, text=tierone!75!black]
  (w1) at (-0.95,-7.5) {\textbf{T1}\\$\rho_1=10^8$};
\node[weight, draw=tiertwo, fill=tiertwo!11, text=tiertwo!75!black,
      right=1.5mm of w1]
  (w2) {\textbf{T2}\\$\rho_2=10^5$};
\node[weight, draw=tierthree, fill=tierthree!10, text=tierthree!75!black,
      right=1.5mm of w2]
  (w3) {\textbf{T3}\\$\rho_3=10^2$};
\node[weight, draw=tierfour, fill=tierfour!10, text=tierfour!75!black,
      right=1.5mm of w3]
  (w4) {\textbf{T4}\\$\rho_4=1$};
\draw[->, line width=0.8pt, black!55]
  ($(w1.south west)+(0,-2.5mm)$) --
  node[below=0.8mm, font=\footnotesize, fill=white, inner sep=1pt]
    {decreasing priority bias}
  ($(w4.south east)+(0,-2.5mm)$);

\node[note, below=8.5mm of w2, xshift=10mm]
  {\textbf{One shared slack per tier per step:} $4N=120$ slacks for $N=30$.\\[1.5pt]
   \textbf{Hard bounds:} $(a,\delta,v)$ are never relaxed. The weights
   \textbf{bias} violations toward lower tiers; they do not impose a
   lexicographic ordering.};
\end{tikzpicture}%
}
\endgroup
\caption{From priority-ordered rulebook to tiered shared slacks. The nine rule families collapse onto four penalty tiers; each tier shares one nonnegative slack per step, penalized by a weight $\rho_j$ separated by orders of magnitude. The ladder \emph{biases}---but does not lexicographically force---residual violations toward the low-priority tiers.}
\label{fig:cep_tiered_slack}
\end{figure}

\subsection{Tiered soft constraints and the NLP}
Fig.~\ref{fig:cep_tiered_slack} shows the mapping from rulebook to program: each priority family maps to one of four tiers, and all constraints in a tier share a single nonnegative slack per step whose penalty weight $\rho_j$ is separated by orders of magnitude. Each rule surrogate is an inequality $g_i(x_k,u_k)\le0$, relaxed by a single shared slack per tier per step via a fixed family-to-tier map $\pi(i)\in\{1,2,3,4\}$:
\begin{equation}
g_i(x_k,u_k)\;\le\;s_k^{(\pi(i))},\qquad s_k^{(j)}\ge0 .
\label{eq:soft}
\end{equation}
W-SQP solves, at each step,
\begin{subequations}
\label{eq:nlp}
\begin{align}
\min_{\mathbf{X},\mathbf{U},\mathbf{S}}\;& J_{\mathrm{base}}+J_{\mathrm{dyn}}+J_{\mathrm{slack}}\\
\text{s.t.}\;& x_{k+1}=f(x_k,u_k),\quad x_0=x_{\mathrm{current}},\\
& g_i(x_k,u_k)\le s_k^{(\pi(i))},\;\; s_k^{(j)}\ge0,\quad\forall i,j,k,\\
& a_{\min}\le a_k\le a_{\max},\;|\delta_k|\le\delta_{\max},\;0\le v_k\le v_{\max},
\end{align}
\end{subequations}
where the last line contains hard physical bounds on acceleration, steering, and speed. Collision avoidance and the road boundary are, by contrast, Tier-1 \emph{soft} constraints: they carry the largest penalty $\rho_1$ but may in principle be relaxed---the sub-$100\%$ safety-rule compliance in Table~\ref{tab:per_rule_compliance} reflects exactly this. The single-slack-per-tier design yields only $4N=120$ slacks (vs.\ one per constraint), which is what makes $10$\,Hz replanning tractable; it couples within-tier constraints, so the slack tracks the most binding active constraint in its tier. We solve \eqref{eq:nlp} with IPOPT~\citep{ipopt_2006} (MUMPS), warm-started by the time-shifted previous solution.

\subsection{Constraint surrogates}
\label{subsec:surrogates}
Let $p_k=[p^x_k,p^y_k]^\top$ and $p_k^{(m)}$ the position of tracked object $m\in\{1,\dots,M\}$. The families and their evaluator-rule coverage are in Table~\ref{tab:constraint_families}.

\noindent\emph{Collision (Tier 1)} and \emph{road boundary (Tier 1):}
\begin{align}
g_{\mathrm{col}}&=d_{\mathrm{safe}}^2-\|p_k-p_k^{(m)}\|_2^2\le s_k^{(1)},\label{eq:collision}\\
g_{\mathrm{edge}}&=n_k^\top(p_k-b_k)-(w_{\mathrm{ego}}+m_{\mathrm{edge}})\le s_k^{(1)},
\end{align}
with $b_k$ the nearest road-edge point and $n_k$ its outward normal.

\noindent\emph{Speed limit, traffic light, route (Tier 2):}
\begin{align}
g_{\mathrm{spd}}&=v_k-v_{\max,k}\le s_k^{(2)},\\
g_{\mathrm{tl}}&=w_R(d_k^{(j)})\,(v_k-v_{\mathrm{stop}})\le s_k^{(2)},\\
g_{\mathrm{route}}&=\|p_k-p_k^{\mathrm{ref}}\|_2-d_{\mathrm{route}}\le s_k^{(2)} .\label{eq:route}
\end{align}
The traffic-light gate uses a clamped proximity weight $w_R(d)=\max\!\bigl(0,\,1-d/R\bigr)$ for stop distance $d_k^{(j)}$ and approach radius $R$; clamping at $0$ makes the gate inactive beyond $R$. The clamp is necessary: without it, $d>R$ gives $w_R<0$, so the term $w_R(v_k-v_{\mathrm{stop}})$ \emph{decreases} with speed and would reward acceleration toward the light. This $\max(0,\cdot)$ is the one non-smooth surrogate; we pass its $C^0$ kink to IPOPT as-is, since it sits off the active set whenever the gate is inactive. None of the solver failures behind the $0.58\%$ open-loop-fallback rate (Sec.~\ref{sec:realtime}) localizes at the kink, and traffic-light compliance ($^{7}r_{1}$) is at parity ($+0.3\pp$).

\noindent\emph{Comfort, jerk, lateral acceleration (Tier 3):}
\begin{align}
g_{\mathrm{dec}}&=-a_k-a_{\mathrm{comf}}\le s_k^{(3)},\\
g_{\mathrm{jerk}}^{\pm}&=\pm\tfrac{a_k-a_{k-1}}{\Delta t}-j_{\max}\le s_k^{(3)},\\
g_{\mathrm{lat}}&=\bigl(\tfrac{v_k^2}{L}\tan\delta_k\bigr)^2-a_{\mathrm{lat,max}}^2\le s_k^{(3)} .
\end{align}

\noindent\emph{Headway (Tier 4):} with a per-agent validity flag $\nu_m\in\{0,1\}$ (unity for a tracked agent, zero otherwise) and the center-to-center distance to agent $m$'s constant-velocity prediction $p_k^{(m)}$,
\begin{equation}
g_{\mathrm{hw}}=\nu_m\bigl(T_{\mathrm{hw}}v_k-\lVert p_k-p_k^{(m)}\rVert\bigr)\le s_k^{(4)} .
\end{equation}
The flag $\nu_m$ only deactivates absent agents. The surrogate is deliberately simple: it uses the center-to-center distance rather than an along-track projection, so at $v{=}15$\,m/s it activates on any tracked agent within $T_{\mathrm{hw}}v{=}22.5$\,m in \emph{any} direction---a weak repulsive potential that the lowest (Tier-4) weight keeps benign. We treat this crude proxy, like the constant-velocity prediction it depends on, as an experimental variable rather than tuning it away (Sec.~\ref{sec:setup}).

\subsection{Surrogate vs.\ evaluator}
\label{subsec:surrogate_vs_evaluator}
W-SQP enforces the differentiable surrogates above, not the evaluator's $25$ rules directly, and the geometries differ (center-to-center vs.\ corner-to-corner collision distance; a norm-based route surrogate vs.\ the evaluator's position-and-heading thresholds). The evaluator remains authoritative. We examine the surrogate--evaluator mismatch directly in Sec.~\ref{sec:results} so that the log-coupling result cannot be an artifact of the controller's internal geometry.

\paragraph{Rules without a dedicated surrogate.}
Twelve of the evaluator's $25$ rules---including stop-sign compliance ($^{8}r_{0}$), crosswalk yield ($^{8}r_{1}$), yield-to-priority ($^{1}r_{0}$), lane intrusion ($^{3}r_{6}$), and the two advisory-only rules---have no dedicated term in the NLP~\eqref{eq:nlp}. Where W-SQP scores well on them anyway (Table~\ref{tab:per_rule_compliance}), the reason is one of two. Some are rarely \emph{applicable} in the scenario set---a stop sign or an occupied crosswalk within range---so their compliance is computed over very few steps and says little about capability. Others are satisfied \emph{incidentally}: yield-to-priority and lane intrusion, for instance, fall out of the Tier-1/Tier-2 collision, road-boundary, and route surrogates. Either way this is an empirical observation about the evaluated scenarios, not a design property, and we list it among the limitations of Sec.~\ref{subsec:probe_artifact}.

\begin{table}[t]
\centering
\caption{Constraint families in the W-SQP NLP and the evaluator rules they cover. Horizon $N=30$, $M=3$ tracked agents, $K_{\mathrm{tl}}=2$ traffic-light stop points.}
\label{tab:constraint_families}
\small
\renewcommand{\arraystretch}{1.1}
\begin{tabular}{p{3.1cm} c r p{2.0cm}}
\toprule
Constraint family       & Tier & Count                                    & Evaluator rules \\
\midrule
Collision avoidance     & 1    & $N{\cdot}M{=}90$                         & $^{10}r_{0}$, $^{9}r_{0}$ \\
Road boundary           & 1    & $N{+}1{=}31$                             & $^{9}r_{1}$, $^{7}r_{0}$ \\
Speed limit             & 2    & $N{+}1{=}31$                             & $^{3}r_{0}$ \\
Traffic-light stop      & 2    & $N{\cdot}K_{\mathrm{tl}}{=}60$           & $^{7}r_{1}$ \\
Route adherence         & 2    & $N{+}1{=}31$                             & $^{2}r_{2}$ \\
Comfort deceleration    & 3    & $N{=}30$                                 & $^{0}r_{2}$ \\
Jerk ($\pm$)            & 3    & $2N{=}60$                                & $^{1}r_{9}$ \\
Lat.\ accel             & 3    & $2N{=}60$                                & $^{1}r_{11}$ \\
Headway                 & 4    & $N{\cdot}M{=}90$                         & $^{3}r_{3}$ \\
\bottomrule
\end{tabular}
\end{table}

\subsection{Controller characterization and limitations}
\label{subsec:probe_artifact}
The per-cycle slack log makes the controller's relaxation decisions inspectable after the fact. On a single scenario the safety and regulatory tiers stay inactive throughout while relaxation is confined to comfort and efficiency, and this holds across the fleet as well (Fig.~\ref{fig:cep_audit_panel}, Sec.~\ref{sec:results})---the priority bias read off the log rather than asserted.

\emph{Acknowledged limitations.} A single slack per tier tracks only the most binding constraint in that tier, so we audit each surrogate's normalized residual rather than reading the active rule off the shared slack. Across the fleet, the route surrogate is relaxed on about $17\%$ of steps, against $2.5\%$ for collision, $4.9\%$ for road edge, and $1.6\%$ for speed (Fig.~\ref{fig:cep_audit_panel}). Constraints within a tier also carry different units---Tier~1 mixes a squared-distance collision residual (m$^2$) with a signed road-edge distance (m)---so raw slack magnitudes are neither comparable nor deliberately balanced; nondimensionalizing each surrogate by its threshold, $\tilde g_i=g_i/\tau_i$, would fix this. The remaining limitations are the lack of a lexicographic guarantee, constant-velocity agent prediction, surrogate--evaluator mismatch (including the twelve rules with no surrogate above), and a single-disk collision model that under-covers long vehicles.

\section{Real-Time Implementation and Computational Analysis}
\label{sec:realtime}
Real-time operation is a design requirement for W-SQP. This section reports the controller's computational cost, the empirical solve-time tail, behavior under a solver-time limit, and the dynamics projection used prior to execution.

\subsection{Compute target and problem size}
All timing figures are measured on a single-workstation target---an Intel Core i9-12950HX (16 cores, up to $5.0$\,GHz, $64$\,GB RAM)---running IPOPT single-threaded with the MUMPS linear solver, so the numbers reflect one solver instance rather than a parallel farm. Re-profiling on an embedded automotive target is future work (Sec.~\ref{sec:conclusion}); the aim here is to characterize the algorithm's intrinsic cost and tail behavior, not to certify a specific ECU.

The tractability of that cost rests on structure. At the fixed settings of Table~\ref{tab:solver_params} ($N{=}30$, up to $M$ tracked agents) the transcribed NLP has $304$ decision variables and $818$ constraints, and its constraint Jacobian $\partial g/\partial w$ carries only $2381$ nonzeros---a density of $0.96\%$ (Fig.~\ref{fig:cep_sparsity}). The banded, near-block-diagonal pattern is the multiple-shooting signature: each stage couples only to its neighbor through the dynamics defect, and the tiered soft constraints add sparse rows rather than dense coupling. IPOPT with MUMPS exploits exactly this sparsity, which is why a nominally $304$-variable nonconvex program can be solved in tens of milliseconds. The $10^{8}$--$10^{0}$ span of the tier penalties is left to IPOPT's default gradient-based NLP scaling with an adaptive barrier strategy; the heavy uncapped solve-time tail concentrates in the same hard, high-divergence scenes---those with the worst-conditioned barrier Hessians---in which the anytime cap binds.

\begin{figure}[t]
\centering
\includegraphics[width=0.72\columnwidth]{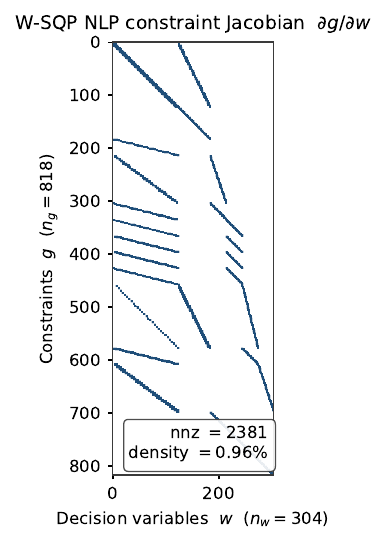}
\caption{Sparsity pattern of the W-SQP NLP constraint Jacobian $\partial g/\partial w$ at the settings of Table~\ref{tab:solver_params} ($304$ variables, $818$ constraints, $2381$ nonzeros, $0.96\%$ density). The banded multiple-shooting structure is what IPOPT/MUMPS exploits to solve the nonconvex program at the loop rate.}
\label{fig:cep_sparsity}
\end{figure}

\subsection{Typical solve time}
Across the $28{,}131$ closed-loop solves of our $n{=}150$ evaluation (Sec.~\ref{sec:results}), warm-started W-SQP solves in a \emph{median} of $28.4$\,ms---comfortably within a $100$\,ms budget---with a \emph{mean} of $77.6$\,ms inflated by a heavy tail (p95 $230$\,ms, p99 $1.08$\,s, max $5.74$\,s). The controller therefore typically meets $10$\,Hz, but not worst-case; a gentle open-loop braking fallback covers the $0.58\%$ of solves ($163/28{,}131$) that exceed tolerance or fail, so tail latency degrades gracefully rather than stalling the rollout. The fallback also cannot drive the log-coupled result of Sec.~\ref{sec:fair}: at $0.58\%$ of solve steps, even if \emph{every} fallback step maximally violated a log-coupled rule, it could shift that rule's compliance by at most ${\approx}0.6\pp$---negligible against the $-10.1\pp$ deficit.

\subsection{Observed latency under a solver-time limit}
We re-run all $150$ scenarios with IPOPT's \texttt{max\_cpu\_time} set to $90$\,ms. When the limit is reached, IPOPT returns its best iterate and the controller records a budget hit. This option is not a strict wall-clock deadline: IPOPT checks it at iteration boundaries, allowing the current iteration to finish, and it measures CPU rather than wall-clock time. On our single-threaded configuration, the two measures track closely. The limit reduces the observed tail to a $28.5$\,ms median, $39.6$\,ms mean, $96$\,ms p99, and $104$\,ms maximum, compared with a $5.74$\,s maximum without the limit; $11.9\%$ of solves reach the limit (Fig.~\ref{fig:cep_solvetime}).

\begin{figure}[t]
\centering
\includegraphics[width=\columnwidth]{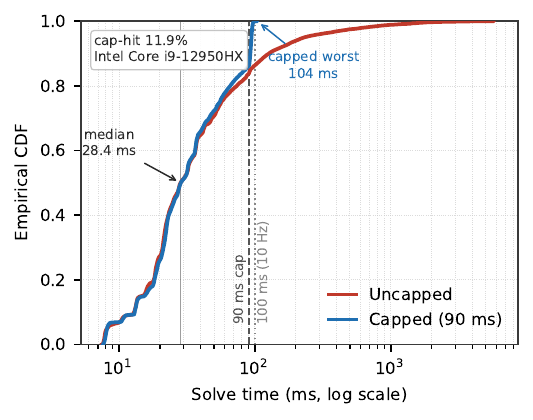}
\caption{Empirical CDF of per-solve wall-clock time over the $n{=}150$ evaluation (single-threaded IPOPT/MUMPS, log-scale abscissa). A $90$\,ms CPU-time limit reduces the maximum from $5.74$\,s to $104$\,ms while leaving the median near $28$\,ms; $11.9\%$ of solves return a best iterate. IPOPT checks the limit at iteration boundaries, so this is an empirical tail reduction rather than a strict wall-clock guarantee.}
\label{fig:cep_solvetime}
\end{figure}
The log-coupling decomposition is nearly unchanged under the solver-time limit ($\overline{\Delta c}_{\log}=-10.8\pp$, $\overline{\Delta c}_{\mathrm{inv}}=+0.4\pp$), so the fair-evaluation result of Sec.~\ref{sec:fair} is not an artifact of unbounded solve time. We therefore describe W-SQP as \emph{anytime-capable}: it usually meets the $10$\,Hz loop period and returns a best iterate when the solver-time limit is reached. A hard-real-time claim would require a platform-level deadline mechanism and worst-case execution-time analysis.

The best iterate retains the hard bounds. The interior-point iterate stays strictly within the hard box bounds on acceleration, steering, and speed (the barrier enforces them, to IPOPT's default $10^{-8}$ bound-relaxation tolerance), and the executed pose's \emph{lateral} acceleration and jerk are additionally hard-clamped in post-processing (to $2.8$\,m/s$^2$ and $2.5$\,m/s$^3$, via a bound on the executed heading increment); ``never relaxes hard physical bounds'' refers to these actuation and box limits, which hold on every step, cap-hit or not. Longitudinal jerk, by contrast, is governed only by the soft Tier-3 slack, which is why the evaluator's longitudinal-jerk rule ($^{1}r_{9}$) shows a small residual deficit ($-3.7\pp$, Table~\ref{tab:per_rule_compliance}) while the lateral comfort rules stay at parity.

What a non-converged iterate does \emph{not} guarantee is exact satisfaction of the dynamics equality $x_{k+1}=f(x_k,u_k)$. Measuring the executed-step defect $\lVert x_1-f(x_0,u_0)\rVert$ over every cap-hit solve in a dedicated $50$-scenario instrumented run ($9{,}374$ solves, $1{,}102$ cap-hits at $11.8\%$): $34\%$ of cap-hit iterates are already exactly feasible and the median defect is below $1$\,mm, but the tail is heavy---p95 $0.20$\,m, p99 $0.74$\,m, worst $15.5$\,m---because where the cap binds almost every step, the raw best iterate can be grossly model-inconsistent.

We therefore \emph{project} the anytime iterate onto the dynamics before execution: the executed pose is $f(x_0,u_0)$, the applied control rolled through the vehicle model, so every executed pose is dynamically feasible by construction and ``never relaxes hard physical bounds'' holds for both the box bounds and the dynamics (Fig.~\ref{fig:cep_projection}). Projection is a no-op for converged solves and alters only the ${\approx}11\%$ cap-hit steps. Under it the capped decomposition is essentially unchanged---$\overline{\Delta c}_{\log}=-9.9\pp$, $\overline{\Delta c}_{\mathrm{inv}}=+0.4\pp$, versus $-10.8/{+}0.4$ raw and $-10.1/{-}0.04$ uncapped---so the fair-evaluation result is not an artifact of the anytime executor; if anything, projection slightly shrinks the log-coupled deficit by removing the raw iterate's rare teleport excursions.

\begin{figure}[t]
\centering
\includegraphics[width=\columnwidth]{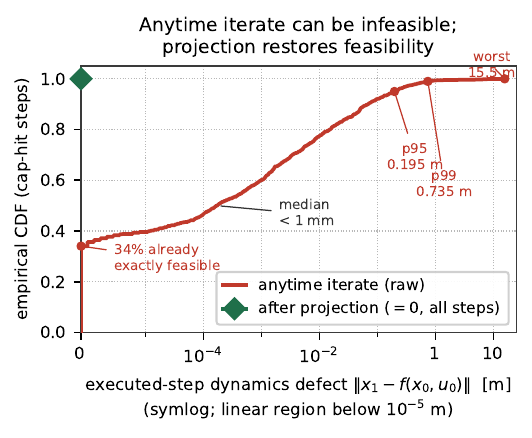}
\caption{Executed-step dynamics defect $\lVert x_1-f(x_0,u_0)\rVert$ over the $1{,}102$ cap-hit steps of the instrumented run (symlog abscissa; linear region below $10^{-5}$\,m so exact zeros are shown). The raw anytime iterate (red) is exactly feasible on $34\%$ of cap-hit steps and $<1$\,mm at the median but has a heavy tail; rolling the applied control through the vehicle model collapses the executed defect to exactly $0$ on every step (green point mass).}
\label{fig:cep_projection}
\end{figure}

\subsection{Sensitivity to the solver-time limit}
To assess sensitivity to available compute, we sweep the CPU-time limit over $\{30,50,70,90,120\}$\,ms and include an unlimited condition on a $40$-scenario subset (Fig.~\ref{fig:cep_budget}). The log-independent safety/regulatory difference remains near parity throughout the sweep: it is $-0.14\pp$ at $30$\,ms, where $55\%$ of solves reach the limit. The log-coupled difference changes from $-11.6\pp$ at $90$\,ms to $-18.5\pp$ at $30$\,ms. The budget-hit rate rises smoothly from $12\%$ at $120$\,ms to $55\%$ at $30$\,ms. These data indicate graceful degradation on this subset, but they do not replace profiling on an embedded automotive target.

\begin{figure}[t]
\centering
\includegraphics[width=\columnwidth]{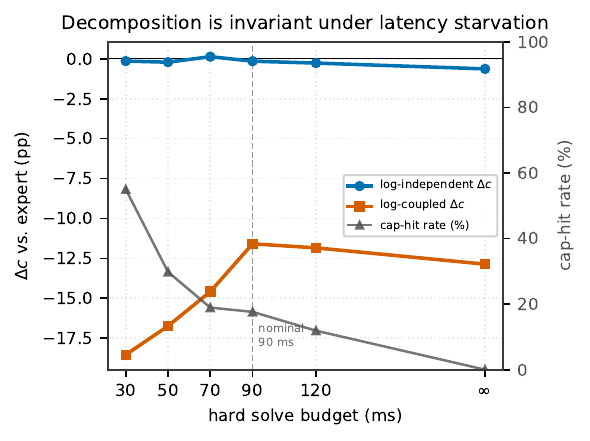}
\caption{Solver-time-limit sweep on a $40$-scenario subset. As the limit decreases from unlimited to $30$\,ms, the log-independent safety/regulatory difference remains near parity (blue, left axis), the log-coupled difference becomes more negative (orange), and the budget-hit rate increases smoothly (grey, right axis).}
\label{fig:cep_budget}
\end{figure}

\section{A Fair, Log-Independent Evaluation Protocol}
\label{sec:fair}
W-SQP may follow a valid trajectory that differs from expert replay. To separate driving quality from resemblance to that replay, we classify closed-loop metrics by the state they reference and decompose aggregate compliance into log-independent and log-coupled terms. Section~\ref{sec:results} uses this protocol to report both terms rather than merging them into one score.

\subsection{Closed-loop rollouts}
\label{subsec:rollouts}
A scenario $\omega\in\mathcal{D}$ fixes an initial state, a static environment $e_\omega$ (map and traffic-control state), and a \emph{recorded} (logged) ego trajectory $\xi^{\star}_\omega=(s^{\star}_0,\dots,s^{\star}_T)$ with planar positions $p^{\star}_k$. An ego policy is a causal map $\pi\colon\mathcal{S}_k\mapsto a_k$. Rolling out $\pi$ in closed loop with a reactive non-ego policy $\pi_{\mathrm{NPC}}$~\citep{waymax_agents_doc} at $10$\,Hz produces the \emph{executed} state sequence
\begin{equation}
\Gamma(\pi,\omega)\;=\;\bigl(\xi^{\pi}_\omega,\;\zeta^{\pi}_\omega,\;e_\omega\bigr),
\label{eq:rollout}
\end{equation}
where $\xi^{\pi}_\omega$ is the executed ego trajectory (positions $p^{\pi}_k$) and $\zeta^{\pi}_\omega$ the executed non-ego trajectories. Because the agents are reactive, $\zeta^{\pi}_\omega$ depends on $\pi$: a planner that deviates from $\xi^{\star}_\omega$ elicits different agent responses. Expert replay is the policy $\pi_{\star}$ that reproduces the log, $\xi^{\pi_{\star}}_\omega=\xi^{\star}_\omega$. All metrics below are functionals of $\Gamma(\pi,\omega)$ alone (evaluation on executed states only), preventing leakage~\citep{nuplan_2021,dauner_2023_pdm}.

\subsection{Compliance metrics as functionals}
Each rule $r$ defines a \emph{violation predicate} $\phi_r$ and an \emph{applicability indicator} $\alpha_r$, both evaluated per step, and a per-rule compliance
\begin{equation}
c_r(\pi,\omega)\;=\;100\cdot\Bigl(1-\tfrac{1}{T}\textstyle\sum_{k=0}^{T-1}\alpha_r(\mathcal{S}^{\pi}_k)\,\phi_r(\mathcal{S}^{\pi}_k)\Bigr)\in[0,100].
\label{eq:rule_compliance}
\end{equation}
The indicator $\alpha_r$ zeroes steps at which the rule does not apply, so the sum counts only \emph{applicable} violations---but the normalization is over all $T$ steps. Compliance $c_r$ therefore blends how often a rule applies with its conditional violation rate, and a rarely applicable rule scores high almost automatically. Under our paired, same-scenario evaluation both policies face nearly identical applicability, so $c_r$ is a sound \emph{relative} measure even though it is not an absolute safety rate---which is why some rules flag even expert replay at high rates (Sec.~\ref{subsec:limitations}). All our claims are paired differences, for which this normalization is appropriate.
The catalogue has $|\mathcal{R}|=25$ rules over $9$ populated priority levels; an aggregate is the weighted mean $C(\pi,\omega)=\sum_{r}w_r\,c_r$, as used in leaderboards. The evaluator's geometry is independent of any planner surrogate (corner-to-corner bounding-box distances, signed distance to road-edge polylines), with a fixed strictness multiplier $\kappa=1.25$ applied identically to both policies. Here ``independent'' means independent of the planner's differentiable surrogates; the evaluator is part of the same rulebook framework, not a third-party benchmark.

\subsection{A taxonomy by referenced state}
\label{sec:taxonomy}
The decisive question for fairness is \emph{what part of the world a predicate reads}. We split the rollout arguments of $\phi_r$ into the executed driving $D_\omega^{\pi}\triangleq(\xi^{\pi}_\omega,\zeta^{\pi}_\omega,e_\omega)$---everything physically realized---and the recorded log $\xi^{\star}_\omega$.

\begin{definition}[Log-coupled metric]
\label{def:logcoupled}
A metric $c_r$ is \emph{log-coupled} if its predicate references the recorded log: there exist two rollouts with identical executed driving $D^{\pi}_\omega$ but different logs $\xi^{\star}_\omega\neq\tilde\xi^{\star}_\omega$ that yield $c_r\neq\tilde c_r$. Equivalently $\phi_r=\phi_r(D^{\pi}_\omega;\xi^{\star}_\omega)$ with nontrivial dependence on $\xi^{\star}_\omega$. We write $r\in\mathcal{R}_{\log}$.
\end{definition}

A canonical log-coupled rule is the deviation predicate
\begin{equation}
\phi_r(\mathcal{S}^{\pi}_k)=\ind\!\left[d(p^{\pi}_k,p^{\star}_k)>\tau_r\right],
\end{equation}
where $\tau_r$ is a threshold and $d$ is a pose distance. Path tracking ($^{0}r_{0}$) and route adherence ($^{2}r_{2}$) have this form. In our evaluator, their position thresholds are $2.4$ and $4.0$\,m, respectively, and route adherence also uses a $0.4$\,rad heading threshold. These values include the strictness factor $\kappa=1.25$. Route adherence is log-coupled here because our evaluator defines it relative to the logged pose; route adherence defined against a lane graph or assigned corridor would instead be map-defined and log-independent.

\begin{definition}[Log-independent metric]
\label{def:logindep}
$c_r$ is \emph{log-independent} if $\phi_r=\phi_r(D^{\pi}_\omega)$ depends on the rollout only through the executed driving and never references $\xi^{\star}_\omega$. We write $r\in\mathcal{R}_{\mathrm{ind}}$.
\end{definition}

Equivalently, a log-independent metric assigns the same score to two rollouts that differ only in \emph{which} compliant path the planner chose; log-coupled metrics are exactly those that do not.

Log-independent rules subdivide by which part of $D^{\pi}_\omega$ they read: \emph{map-defined} (static map/traffic control, e.g.\ drivable surface, traffic-light, stop sign), \emph{interaction-defined} (other agents $\zeta^{\pi}_\omega$, e.g.\ collision, headway, clearance), and \emph{controller-defined} (ego smoothness only, e.g.\ jerk, comfort). For analysis we group the safety/regulatory log-independent rules into the set $\mathcal{R}_{\mathrm{inv}}$ ($16$ rules), report controller-defined rules ($\mathcal{R}_{\mathrm{ctrl}}$, $7$ rules) separately as an intermediate group, and isolate $\mathcal{R}_{\log}$ ($2$ rules). We refer to $\mathcal{R}_{\mathrm{inv}}$ as \emph{log-independent} as shorthand for \emph{log-independent safety/regulatory}: the group's compliance is invariant to whether the ego imitates the log, \emph{not} insensitive to interaction---collision, headway, and clearance plainly depend on the other agents. The partition is fixed \emph{a priori} from each rule's definition, independent of any result.

\subsection{The log-coupling confound}
The following two facts are elementary but underpin the fair comparison of Sec.~\ref{sec:results}: they show a log-coupled metric measures imitation, not driving quality.

\begin{proposition}[Expert-replay dominance]
\label{prop:dominance}
For every log-coupled deviation rule $r\in\mathcal{R}_{\log}$ and every scenario $\omega$,
\begin{equation}
c_r(\pi_{\star},\omega)=100\;\ge\;c_r(\pi,\omega)\qquad\text{for all policies }\pi,
\label{eq:expert_dominance}
\end{equation}
and the inequality is independent of the physical driving quality of $\pi$.
\end{proposition}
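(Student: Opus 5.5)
The plan is to compute $c_r(\pi_\star,\omega)$ directly from the compliance functional \eqref{eq:rule_compliance}, and then read off the other two claims from the range of that functional and the form of the predicate. The argument is a direct unwinding of the definitions; no earlier proposition is needed.

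\emph{Step 1: the upper bound.} For any policy $\pi$ and scenario $\omega$, each summand $\alpha_r(\mathcal{S}^{\pi}_k)\,\phi_r(\mathcal{S}^{\pi}_k)$ lies in $\{0,1\}$. Hence $\tfrac{1}{T}\sum_k \alpha_r\phi_r\in[0,1]$, and so $c_r(\pi,\omega)\le 100$ always.

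\emph{Step 2: the expert attains it.} Consider expert replay. By the defining property $\xi^{\pi_\star}_\omega=\xi^\star_\omega$ in Sec.~\ref{subsec:rollouts}, the executed position is $p^{\pi_\star}_k=p^\star_k$ at every step $k$. For the canonical deviation predicate this gives $\phi_r(\mathcal{S}^{\pi_\star}_k)=\ind[d(p^\star_k,p^\star_k)>\tau_r]$.

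Since $d$ is a pose distance, $d(p,p)=0$. Since $\tau_r>0$ (for path tracking $\tau_r=2.4$\,m, for route adherence $\tau_r=4.0$\,m), the indicator is $0$.

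For route adherence I also have to handle the heading condition. Because the full logged state is replayed, the heading deviation is likewise $0<0.4$\,rad, so this predicate vanishes too. Every summand is zero whatever $\alpha_r$ is, and $c_r(\pi_\star,\omega)=100$. Combining with Step 1 gives \eqref{eq:expert_dominance}.

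\emph{Step 3: independence from physical driving quality.} I make this precise using the decomposition $\phi_r=\phi_r(D^\pi_\omega;\xi^\star_\omega)$ from Definition~\ref{def:logcoupled}. The deviation predicate reads $D^\pi_\omega$ only through the ego positions $p^\pi_k$, compared against $p^\star_k$. It does not depend on $\zeta^\pi_\omega$ or $e_\omega$ except through $\alpha_r$, and $\alpha_r$ cannot raise $c_r$ above the bound in Step 1.

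So the bound is a statement about the trajectory distance to the log, not about collisions, map compliance, or comfort. Concretely, two policies with identical log-independent scores can have different $c_r$, and the value $100$ is attained by $\pi_\star$ irrespective of its own $c_{r'}$ for $r'\in\mathcal{R}_{\mathrm{ind}}$.

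\emph{Main obstacle.} The only delicate point is exactly what ``independent of driving quality'' means, since it is informal. I would formalize it as follows: the inequality holds for every $\pi$, so no property of $\pi$'s executed driving beyond its log distance can enter it.

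I would also flag two degenerate cases that the argument rules out:
\begin{itemize}
\item a threshold $\tau_r=0$ together with a non-strict inequality in the predicate;
\item any discrepancy between the replayed and the logged pose, for example from simulator discretization.
\end{itemize}
The strict inequality and the positive thresholds quoted in Sec.~\ref{sec:taxonomy} exclude the first, and the exact-replay definition of $\pi_\star$ excludes the second.
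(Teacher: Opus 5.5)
Your proposal is correct and follows the paper's proof. Exact replay gives $p^{\pi_\star}_k=p^\star_k$, so the deviation predicate never fires and $c_r(\pi_\star,\omega)=100$, and the bound holds because compliance cannot exceed $100$; your handling of the route-adherence heading clause and of the strict inequality with positive thresholds only spells out what the paper's short argument leaves implicit.
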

\begin{proof}
Expert replay gives $p^{\pi_{\star}}_k=p^{\star}_k$ and therefore zero deviation at every step. Thus $\phi_r=0$ and, by \eqref{eq:rule_compliance}, $c_r(\pi_{\star},\omega)=100$. Since compliance cannot exceed $100$, \eqref{eq:expert_dominance} follows independently of the executed driving quality.
\end{proof}

\begin{corollary}[Aggregate imitation premium]
\label{cor:premium}
Let $C(\pi,\omega)=\sum_r w_r c_r(\pi,\omega)$. The aggregate gap of any policy $\pi$ relative to expert replay decomposes \emph{exactly} as
\begin{equation}
\begin{aligned}
C(\pi_{\star},\omega)-C(\pi,\omega)=\;&\underbrace{\sum_{r\in\mathcal{R}_{\log}}\!w_r\bigl(100-c_r(\pi,\omega)\bigr)}_{\Delta_{\mathrm{imit}}(\pi,\omega)\;\ge\;0}\\[2pt]
&+\underbrace{\sum_{r\in\mathcal{R}_{\mathrm{ind}}}\!w_r\bigl(c_r(\pi_{\star},\omega)-c_r(\pi,\omega)\bigr)}_{\delta_{\mathrm{ind}}(\pi,\omega)}.
\end{aligned}
\label{eq:premium}
\end{equation}
The first term, the \emph{imitation premium} $\Delta_{\mathrm{imit}}\ge 0$, is paid purely for diverging from the log; the second, $\delta_{\mathrm{ind}}$, is the genuine driving-quality difference and may take either sign. Two consequences follow. A non-imitative planner that matches the expert on the log-independent rules ($\delta_{\mathrm{ind}}=0$) is reported worse by exactly $\Delta_{\mathrm{imit}}$. And a planner that drives \emph{better} ($\delta_{\mathrm{ind}}<0$) is still ranked below the expert whenever $\Delta_{\mathrm{imit}}>-\delta_{\mathrm{ind}}$: the premium can mask superior driving.
\end{corollary}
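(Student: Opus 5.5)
The decomposition \eqref{eq:premium} is pure bookkeeping, so I would prove it by splitting the aggregate over a partition of the rule set and then invoking Proposition~\ref{prop:dominance} on the log-coupled part.

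The first step is to check that $\mathcal{R}_{\log}$ and $\mathcal{R}_{\mathrm{ind}}$ partition $\mathcal{R}$. Definitions~\ref{def:logcoupled} and~\ref{def:logindep} are complementary: a predicate either depends nontrivially on $\xi^{\star}_\omega$ or it does not. Every rule therefore lies in exactly one class. The groups $\mathcal{R}_{\mathrm{inv}}$ and $\mathcal{R}_{\mathrm{ctrl}}$ are subsets of $\mathcal{R}_{\mathrm{ind}}$, so they need no separate treatment. Linearity of $C$ then gives
\[
C(\pi_{\star},\omega)-C(\pi,\omega)=\sum_{r\in\mathcal{R}_{\log}}w_r\bigl(c_r(\pi_{\star},\omega)-c_r(\pi,\omega)\bigr)+\sum_{r\in\mathcal{R}_{\mathrm{ind}}}w_r\bigl(c_r(\pi_{\star},\omega)-c_r(\pi,\omega)\bigr).
\]

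The second step substitutes $c_r(\pi_{\star},\omega)=100$ for every $r\in\mathcal{R}_{\log}$, which is Proposition~\ref{prop:dominance}. The first sum then becomes $\Delta_{\mathrm{imit}}$ and the second is $\delta_{\mathrm{ind}}$ by definition, so the identity is exact. For the sign of $\Delta_{\mathrm{imit}}$, each summand is $w_r(100-c_r)$. Here $c_r\le100$ holds by \eqref{eq:rule_compliance}, and I take $w_r\ge0$ as the leaderboard convention for the weights. Hence $\Delta_{\mathrm{imit}}\ge0$.

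The two consequences are then one-line rearrangements. If $\delta_{\mathrm{ind}}=0$, the gap equals $\Delta_{\mathrm{imit}}$. If $\delta_{\mathrm{ind}}<0$, the gap $\Delta_{\mathrm{imit}}+\delta_{\mathrm{ind}}$ is positive exactly when $\Delta_{\mathrm{imit}}>-\delta_{\mathrm{ind}}$, so $\pi$ is ranked below the expert in that case.

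The main obstacle is the scope of Proposition~\ref{prop:dominance}. It is stated only for log-coupled \emph{deviation} rules, those of the form $\ind[d(p^{\pi}_k,p^{\star}_k)>\tau_r]$. Definition~\ref{def:logcoupled} admits more general log-coupled predicates, for which expert replay need not score $100$. I would handle this by noting that both members of $\mathcal{R}_{\log}$ in this catalogue, $^{0}r_{0}$ and $^{2}r_{2}$, are deviation rules. Route adherence also has a heading threshold, but its heading deviation is likewise zero under replay. I would therefore state the corollary for $\mathcal{R}_{\log}$ consisting of deviation rules. For a general $\mathcal{R}_{\log}$ the exact identity survives with $100$ replaced by $c_r(\pi_{\star},\omega)$, but the sign claim $\Delta_{\mathrm{imit}}\ge0$ is then no longer guaranteed.
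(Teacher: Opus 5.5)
Your proof is correct and follows the paper's route. You partition $\mathcal{R}=\mathcal{R}_{\log}\cup\mathcal{R}_{\mathrm{ind}}$, split the linear aggregate, substitute $c_r(\pi_{\star},\omega)=100$ from Proposition~\ref{prop:dominance} on the log-coupled block, and obtain both consequences by rearrangement. Your added caveats are sound refinements of assumptions the paper leaves implicit: $\Delta_{\mathrm{imit}}\ge0$ needs $w_r\ge0$, and Proposition~\ref{prop:dominance} covers only deviation-type log-coupled rules, which both $^{0}r_{0}$ and $^{2}r_{2}$ are, heading threshold included.
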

\begin{proof}
Partition $\mathcal{R}=\mathcal{R}_{\log}\cup\mathcal{R}_{\mathrm{ind}}$, split the sum, and apply Prop.~\ref{prop:dominance} ($c_r(\pi_\star,\omega)=100$ for $r\in\mathcal{R}_{\log}$) to the first group. The decomposition holds termwise; no sign assumption on $\delta_{\mathrm{ind}}$ is used.
\end{proof}

\begin{proposition}[Non-identifiability of driving quality]
\label{prop:nonident}
There exist two policies $\pi_1,\pi_2$ with identical log-independent compliance, $c_r(\pi_1,\omega)=c_r(\pi_2,\omega)$ for all $r\in\mathcal{R}_{\mathrm{ind}}$---equally good drivers under every quality metric---that a log-coupled metric separates: $c_r(\pi_1,\omega)\neq c_r(\pi_2,\omega)$ for some $r\in\mathcal{R}_{\log}$. Consequently, any aggregate with $w_r>0$ on a log-coupled rule is \emph{not} a function of driving quality alone.
\end{proposition}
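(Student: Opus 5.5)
The proof is a direct construction. Fix a scenario $\omega$ and take $\pi_1=\pi_\star$, expert replay. Then construct a second policy $\pi_2$ whose executed driving $D^{\pi_2}_\omega$ satisfies every log-independent predicate exactly as $\pi_\star$ does, but whose ego positions depart from $p^\star_k$ by more than the threshold $\tau_r$ of a log-coupled deviation rule (path tracking, $\tau_r=2.4$\,m) on at least one step.

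The natural candidate is a rigid lateral shift of the logged ego trajectory into an adjacent lane. Concretely, the scenario $\omega$ should be chosen so that:
\begin{itemize}
\item the map contains such a lane;
\item applicability and violation of every $r\in\mathcal{R}_{\mathrm{ind}}$ are unchanged by the shift. Drivable surface and road edge should remain satisfied, there should be no agents nearby so that the reactive $\pi_{\mathrm{NPC}}$ responds identically, and speeds, accelerations and jerks are identical because the shift is rigid.
\end{itemize}
Controller-defined rules then coincide trivially, since they read only ego smoothness. Map- and interaction-defined rules coincide by the choice of scenario. The policy $\pi_2$ is causal: it simply outputs the actions that realize the shifted trajectory.

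With this in hand, Prop.~\ref{prop:dominance} gives $c_r(\pi_1,\omega)=100$. For $\pi_2$, the deviation predicate fires on a positive number of steps, so \eqref{eq:rule_compliance} gives $c_r(\pi_2,\omega)<100$.

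For the consequence, consider any aggregate $C=\sum_r w_r c_r$ with $w_r>0$ on some log-coupled rule. By Corollary~\ref{cor:premium}, $C(\pi_1,\omega)-C(\pi_2,\omega)=\Delta_{\mathrm{imit}}(\pi_2,\omega)+\delta_{\mathrm{ind}}$. Here $\delta_{\mathrm{ind}}=0$ by construction, and $\Delta_{\mathrm{imit}}>0$. So $C$ takes different values on two policies with identical quality vectors $(c_r)_{r\in\mathcal{R}_{\mathrm{ind}}}$, and therefore cannot factor through that vector.

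The main obstacle is justifying that such a scenario and shift exist, since the statement is existential and we need only one witness. The alternative is to argue abstractly from Definition~\ref{def:logcoupled}, which guarantees nontrivial dependence on the log. That definition, however, varies the log while holding the driving fixed, whereas the proposition varies the driving while holding the log fixed. I would therefore rely on the explicit lane-shift construction, possibly in a single-agent or empty-road scenario, to sidestep the reactive-agent coupling.
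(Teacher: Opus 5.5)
Your proposal is essentially the paper's own proof. Both compare expert replay with a rigidly translated copy of the log in the adjacent lane of an empty, straight road: controller-defined rules agree because the shift is rigid, interaction-defined rules hold vacuously, map-defined rules agree by the choice of map, and only the log-coupled predicate fires.

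One small tightening is needed for the consequence. You calibrated the offset to path tracking ($2.4$\,m). An aggregate whose only positive log-coupled weight is on route adherence ($^{2}r_{2}$) has a $4.0$\,m position threshold, and a rigid shift produces zero heading deviation. A typical single-lane offset below $4.0$\,m could therefore leave $\Delta_{\mathrm{imit}}=0$. Either choose the offset to exceed every log-coupled threshold, or pick the witness after fixing the aggregate.
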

\begin{proof}
One scenario suffices. Take a straight, two-lane, drivable segment with no other agent in range of any interaction rule, and a logged trajectory $\xi^{\star}_\omega$ centered in lane~1. Let $\pi_1$ reproduce $\xi^{\star}_\omega$, and let $\pi_2$ run the \emph{identical} longitudinal motion---same speed, heading, acceleration, jerk---on a rigidly translated path centered in lane~2, with the offset chosen to exceed some log-coupled threshold $\tau_r$. The two motions differ only by a constant lateral shift, so every \emph{controller-defined} rule (comfort, jerk, lateral acceleration) reads the same on both; no agent is in range, so every \emph{interaction-defined} rule is vacuously satisfied by both; and both stay on the drivable surface under the same limit, so every \emph{map-defined} rule agrees. Thus $c_r(\pi_1,\omega)=c_r(\pi_2,\omega)$ for all $r\in\mathcal{R}_{\mathrm{ind}}$. But the log-coupled predicate fires for $\pi_2$ and not $\pi_1$, so the two differ there. Any aggregate with $w_r>0$ on that rule therefore separates two policies of identical driving quality. The case studies of Sec.~\ref{subsec:cases} realize the same separation approximately on real roads; the empty-road construction makes it exact.
\end{proof}

These results motivate a fairness desideratum for quality metrics.

\begin{definition}
\label{def:invariance}
\emph{Imitation invariance.} A metric is imitation-invariant if its value is unchanged when the reference log $\xi^{\star}_\omega$ is replaced by any other dynamically feasible, rule-satisfying trajectory while the executed driving $D^{\pi}_\omega$ is held fixed.
\end{definition}

Log-independent metrics are imitation-invariant by Def.~\ref{def:logindep}; log-coupled metrics violate it. In Goodhart terms~\citep{manheim_goodhart_2018}, a log-coupled rule optimizes resemblance to one recorded human rather than the underlying objective. The protocol therefore prescribes how W-SQP is scored in Sec.~\ref{sec:results}: report compliance over $\mathcal{R}_{\mathrm{inv}}$ separately from the imitation score over $\mathcal{R}_{\log}$, so driving quality is not conflated with deliberate divergence from the log. Whether the two are in fact separable in practice---$\Delta_{\mathrm{imit}}$ large while the log-independent deficit is null---is the empirical question the experiments settle.

\subsection{Statistical methodology}
\label{subsec:stats}
With $n=150$ paired scenarios under identical configurations, the deterministic simulator makes the ego policy the only configured difference within each pair. Compliance differences are non-normal because many values concentrate near $100\%$, so the Wilcoxon signed-rank test is primary and the paired $t$-test is secondary. Across the $25$ per-rule hypotheses, Benjamini--Hochberg correction controls the false-discovery rate at $\alpha=0.05$~\citep{benjamini_hochberg_1995}. For the contrast between $\mathcal{R}_{\log}$ and $\mathcal{R}_{\mathrm{inv}}$, we report Cohen's $d$, bootstrap $95\%$ confidence intervals for group means, and the \emph{a priori} sample size for $80\%$ power.

\section{Experimental Setup}
\label{sec:setup}
\textbf{Evaluation environment and scope.} The validation is simulation-based. Its relevance to urban autonomous driving comes from four properties: (i) every scenario uses road geometry, actor trajectories, footprints, and speeds from WOMD logs~\citep{ettinger_womd_iccv_2021}; (ii) the ego follows a kinematic-bicycle model with bounded acceleration, steering, lateral acceleration, and jerk (Table~\ref{tab:solver_params}); (iii) non-ego agents respond through the IDM behavior model~\citep{waymax_agents_doc} rather than replaying fixed logs; and (iv) the same controller implementation is timed during $10$\,Hz closed-loop evaluation. Important deployment effects remain absent: the simulator provides ground-truth state and omits sensing and estimation errors, actuation latency, and low-level tracking error. The results, therefore, characterize closed-loop simulation performance and workstation runtime, not on-road transfer or embedded deployment; hardware-in-the-loop and vehicle tests remain necessary (Sec.~\ref{sec:conclusion}).

Every reported number is traceable to a fixed scenario, a deterministic simulator, and evaluator output on executed states. The pipeline separates three components: (i)~a Waymax~\citep{gulino_waymax_2023} wrapper instantiating WOMD~\citep{ettinger_womd_iccv_2021} scenarios at $10$\,Hz; (ii)~an ego policy (expert replay, W-SQP of Sec.~\ref{sec:controller}, or a baseline controller); and (iii)~the $25$-rule evaluator of Sec.~\ref{sec:fair}, consuming only executed states. At each step the loop queries $a_k=\pi(\mathcal{S}_k)$, advances to $\mathcal{S}_{k+1}$ with reactive IDM non-ego agents~\citep{waymax_agents_doc}, and evaluates all monitors. For W-SQP the NLP is rebuilt once in CasADi~\citep{casadi_2019} and re-solved every step via IPOPT~\citep{ipopt_2006}; solver settings and cost weights are in Table~\ref{tab:solver_params}, fixed across all scenarios.

\textbf{Scenario set.} We use $n=150$ WOMD scenarios drawn from shards $0$--$39$ by a complexity-ranked scanner. The ranking combines SDC path length and heading change, agent count and diversity (with additional weight for vulnerable road users), and traffic-control presence. Each scenario lasts approximately $18.7$\,s at $10$\,Hz. This selection favors interaction-rich scenes; Section~\ref{sec:results} repeats the decomposition on a uniformly random sample. Each scenario runs paired expert-replay and W-SQP rollouts under the same NPC configuration; determinism makes the ego policy the only configured difference between a pair. The dataset yields $28{,}131$ NLP solves (approximately $187$ per scenario); $0.58\%$ use the open-loop braking fallback after a solver failure.

\textbf{Exogenous interaction density.} The density falsification control (Sec.~\ref{sec:results}) needs a difficulty measure that is a property of the \emph{scene}, not the planner. At each step we count the other logged agents within radius $R$ of the \emph{logged} ego position and average over the rollout. Computed from the logs, it is identical for both policies---unlike planner-derived proxies such as solve time, which are endogenous and would make any density--compliance correlation circular. We report $R\in\{20,30,50\}$\,m.

\textbf{Log-informed prediction ablation.} On a $30$-scenario subset we replace W-SQP's constant-velocity agent prediction with each agent's constant velocity taken from its logged displacement over the horizon. This uses future log information, so it is not available online; nor is it a perfect oracle, since logged agents do not react to the altered ego rollout. It tests sensitivity to this one velocity estimate.

\textbf{Baseline controllers.} To evaluate W-SQP as a controller, not only as an evaluation instrument, we compare it against two architecturally distinct non-imitative planners on the same $150$ scenarios and evaluator: a lightweight reactive controller (pure-pursuit $+$ IDM) and a stronger sampling-based planner (PDM-Closed--style). Neither scores nor tracks the logged trajectory, so the comparison isolates control performance from the shared property of not matching the log.

\textbf{Baseline controller: pure-pursuit $+$ IDM.} Lateral control is pure pursuit toward a lookahead point on the route geometry (lookahead $6.0+0.6\,v$\,m); longitudinal control is the Intelligent Driver Model with respect to the nearest in-path lead agent (free-flow speed set to the $75$th percentile of the scene's logged ego speed, $T_{\mathrm{hw}}{=}1.5$\,s, $s_0{=}5$\,m). It uses no optimization, tracks no logged states, and follows the route at its own IDM-chosen speed.

\textbf{Baseline controller: PDM-Closed--style.} The stronger baseline is a PDM-Closed--style planner~\citep{dauner_2023_pdm}, a reimplementation in the spirit of the design that won the nuPlan closed-loop challenge (not a fully leaderboard-faithful port; see the route-corridor caveat below). It follows the proposal-and-select recipe: it generates $15$ candidate policies (three lateral offsets $\times$ five IDM target-speed profiles) along a route corridor, forward-simulates each over the $3$\,s horizon with constant-velocity agents, scores them with a nuPlan-CLS--style rule score (collision and off-road gates $\times$ weighted route-progress, comfort, and speed-limit terms, with \emph{no} log-deviation term), and executes the first control of the best. It is non-imitative---the score never references the logged trajectory---yet competent by construction on the rule set. We use the logged path \emph{geometry} as the route corridor (a valid road; a full roadgraph route is a documented extension) and cap the pace to what the corridor affords, so divergence is emergent from the planner's own speed/gap/offset choices, not imposed.

\textbf{Threshold-sensitivity.} To check that the log-coupled deficit is not an artifact of the deviation thresholds, we log the evaluator's per-step raw deviation for the log-coupled rules and recompute their compliance across a sweep of thresholds $\tau$ offline, for both planners (Sec.~\ref{sec:results} reports the range).

\textbf{Divergence sweep.} To characterize the deficit across a continuum of planners between non-imitative W-SQP and expert replay, we add a single scalar $\beta\in[0,1]$ that blends W-SQP's executed next pose toward the logged pose at $t{+}1$: $\beta{=}0$ is W-SQP, $\beta{\to}1$ approaches expert replay. The blend is applied \emph{in closed loop}---the blended pose is executed and the reactive IDM agents respond to it each step, so the log-independent scores along the sweep are well defined. Because the pull toward the log is re-applied every $0.1$\,s, its effect compounds over the rollout: even a small per-step weight drives the executed trajectory close to the log, which is why by $\beta{=}0.25$ the mean divergence has already collapsed to ${\approx}0$ (Fig.~\ref{fig:divergence_sweep}). Sweeping $\beta$ thus parameterizes ego--log divergence directly.

\textbf{Determinism and re-anchoring.} All data are transformed to an ego-local frame for conditioning and inverted after solving. The low-weight reference $x^{\mathrm{ref}}$ is time-indexed from the log; only if ego drift exceeds $10$\,m does it re-anchor to the closest future log point within a $6$\,s window, preventing the tracking term from racing ahead while still allowing recovery. Per-step speed limits use WOMD lane-type proxies (rule $^{3}r_{0}$). These choices keep the controller competent without making it imitative.

\begin{table}[t]
\centering
\caption{Core W-SQP (NMPC) and IPOPT solver settings, fixed across all $150$ scenarios. The tier penalties $(\rho_1,\dots,\rho_4)$ weight squared slacks whose physical units differ by constraint family, so their magnitudes are not directly comparable across tiers (Sec.~\ref{subsec:probe_artifact}); $Q,R,S$ and the shaping weights are the usual dimensionless NMPC cost weights.}
\label{tab:solver_params}
\footnotesize
\renewcommand{\arraystretch}{1.12}\setlength{\tabcolsep}{3pt}
\begin{tabular}{@{}l l@{}}
\toprule
Parameter & Value \\
\midrule
Timestep $\Delta t$ / Horizon $N$ & $0.1$\,s / $30$ ($3.0$\,s) \\
Replan interval & every step ($0.1$\,s) \\
Wheelbase $L$ / max steer $\delta_{\max}$ & $4.5$\,m / $0.5$\,rad \\
Accel bounds $[a_{\min},a_{\max}]$ & $[-5.0,\,3.0]$\,m/s$^2$ \\
Speed bound $v_{\max}$ & $30.0$\,m/s \\
Collision buffer $d_{\mathrm{safe}}$ & $3.0$\,m \\
Road margin / route dev.\ $d_{\mathrm{route}}$ & $0.5$ / $2.0$\,m \\
Comfort dec./lat.\,acc./jerk & $3.0$/$3.0$\,m/s$^2$, $5.0$\,m/s$^3$ \\
Headway time $T_{\mathrm{hw}}$ & $1.5$\,s \\
TL radius / speed thr. & $30.0$\,m / $0.5$\,m/s \\
$(\rho_1,\rho_2,\rho_3,\rho_4)$ & $(10^8,10^5,10^2,1)$ \\
\midrule
IPOPT tol / acceptable tol & $10^{-4}$ / $10^{-3}$ \\
IPOPT linear solver / warm start & MUMPS / enabled \\
$Q,R,S$ & $\diag(1,5,.5,.5)$, $\diag(.1,.5)$, $\diag(.5,1)$ \\
$\alpha_T$ / $W_{\mathrm{jerk}}$ & $10.0$ / $0.1$ \\
$W_{a_{\mathrm{lat}}}$ / $W_{\dot a_{\mathrm{lat}}}$ & $0.5$ / $0.05$ \\
\bottomrule
\end{tabular}
\end{table}

\section{Results}
\label{sec:results}
We now evaluate W-SQP in closed loop. All results use paired evaluation at $10$\,Hz across $n{=}150$ WOMD scenarios, with compliance computed from executed states. We proceed in five steps. We first place W-SQP as a controller against two non-imitative baselines and a no-tiering ablation (Sec.~\ref{subsec:headtohead}). We then show that its apparent aggregate deficit is almost entirely log-coupled (Sec.~\ref{subsec:decomp}), which makes the naive aggregate unfair and lets a log-independent score flip W-SQP's verdict from loss to parity (Sec.~\ref{subsec:instrument}). Finally, we rule out the leading alternative explanations (Sec.~\ref{subsec:falsify}) and illustrate the mechanism on two case studies (Sec.~\ref{subsec:cases}).

\subsection{Control performance against baselines}
\label{subsec:headtohead}
Before analyzing the compliance \emph{score}, we place W-SQP as a controller against the two non-imitative baselines of Sec.~\ref{sec:setup} (reactive PP+IDM and sampling-based PDM-Closed--style) and a no-tiering ablation, on the same $150$ scenarios and the same evaluator. We read control quality off the \emph{log-independent} rules---the safety, regulatory, and interaction rules that score driving quality without reference to the human log. On this group W-SQP attains $93.6\%$ compliance, matching expert replay ($93.7\%$; no systematic group-level deficit, Sec.~\ref{subsec:decomp}) and leading both non-imitative baselines---PP+IDM $91.8\%$ and PDM-Closed $91.5\%$---by roughly $2\pp$. Among controllers that do not track the log, W-SQP is the one whose safety and regulatory behavior comes closest to the human's.

On the \emph{log-coupled} (imitation) axis every non-imitative controller necessarily falls below the expert's $100\%$, but the spread is enormous---W-SQP $89.9\%$, PDM-Closed $67.9\%$, PP+IDM $33.0\%$---and it orders the controllers by how much they happen to resemble the recorded human, not by how safely they drive. W-SQP's comparatively high log-coupled score is itself in part an artifact of its soft route corridor (Eq.~\ref{eq:route}, $d_{\mathrm{route}}{=}2$\,m at Tier-2 weight) and low-weight tracking reference, which supply the route \emph{intent} that WOMD provides no other way to specify; a corridor-free planner of equal driving quality would diverge more and score lower. The $-10.1\pp$ log-coupled deficit is therefore a conservative \emph{lower} bound on the imitation penalty.

This contrast---a tight log-independent ranking beside a $57\pp$-wide log-coupled spread---is the confound in miniature. Fig.~\ref{fig:cep_plane} places every planner in the plane of the two axes. Driving quality (horizontal) separates the planners by only ${\approx}2\pp$; imitation (vertical) spreads them by ${\approx}57\pp$; and the expert sits on the $y{=}100$ dominance boundary of Prop.~\ref{prop:dominance}. A \emph{fair} score reads the horizontal axis alone, so its iso-lines are vertical. The \emph{naive aggregate}'s iso-lines are instead tilted by the $16{:}2$ group sizes, so it ranks a two-dimensional reality by projecting it obliquely---and imitation leaks into the ordering.

\begin{figure}[t]
\centering
\includegraphics[width=\columnwidth]{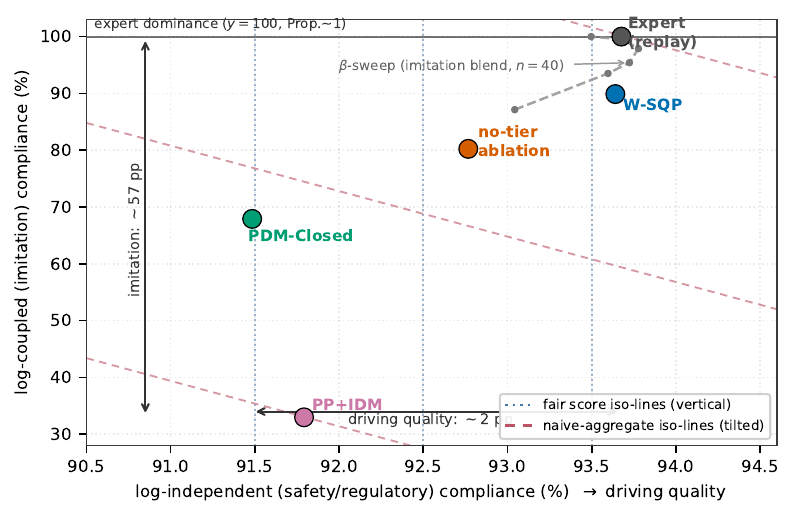}
\caption{The planner-space compliance plane. $x$: log-independent (safety/regulatory) compliance $=$ driving quality; $y$: log-coupled (imitation) compliance. Fair-score iso-lines are vertical (dotted); naive-aggregate iso-lines are tilted (dashed); $y{=}100$ is the expert-dominance boundary (Prop.~\ref{prop:dominance}). The $\beta$-sweep (grey, $n{=}40$ subset) climbs from W-SQP toward the expert corner as the executed pose is blended toward the log.}
\label{fig:cep_plane}
\end{figure}

\textbf{The tiering carries the safety compliance.} The paper's central design claim is that the priority-biased \emph{tiering}, not soft-constraint relaxation on its own, is what protects the high-priority rules. Three experiments support it. First, we remove the tiering entirely, setting all four slack weights equal to their geometric mean ($\bar\rho{=}10^{4}$) and leaving the controller otherwise unchanged. The log-independent deficit grows from $-0.04\pp$ to $-0.91\pp$---an order of magnitude larger---the log-coupled deficit nearly doubles from $-10.1\pp$ to $-19.8\pp$, and overall compliance falls from $93.4\%$ to $91.7\%$. The flat program solves at the same rate, so this is the priority structure at work, not solver behavior.

The flat weight, however, changes two things at once: it removes the \emph{ordering} between tiers and also lowers the overall \emph{stiffness} of the top tiers. The remaining two experiments separate these. We first sweep the priority ratio $r$ between adjacent tiers while holding the geometric-mean penalty fixed at $10^4$, so total stiffness is constant, and only the ordering varies ($r{=}1$ is the flat ablation, $r{=}10^3$ the paper's separation). As ordering strengthens on a $40$-scenario subset, the safety/regulatory deficit improves monotonically from $-1.08\pp$ toward $-0.68\pp$ and the log-coupled deficit from $-17.5$ toward $-13.4\pp$, while median solve time stays ${\approx}30$\,ms throughout (Fig.~\ref{fig:cep_tier_sweep}). We then check the opposite extreme: raising all four weights uniformly to $10^8$---maximally stiff but unordered---degrades the safety/regulatory group \emph{further} on the full set ($-1.03\pp$, versus $-0.91$ at flat $10^4$ and $-0.04$ tiered), with zero solver failures. Stiffer constraints without ordering are worse, not better; ordering is doing the work.

\begin{figure}[t]
\centering
\includegraphics[width=\columnwidth]{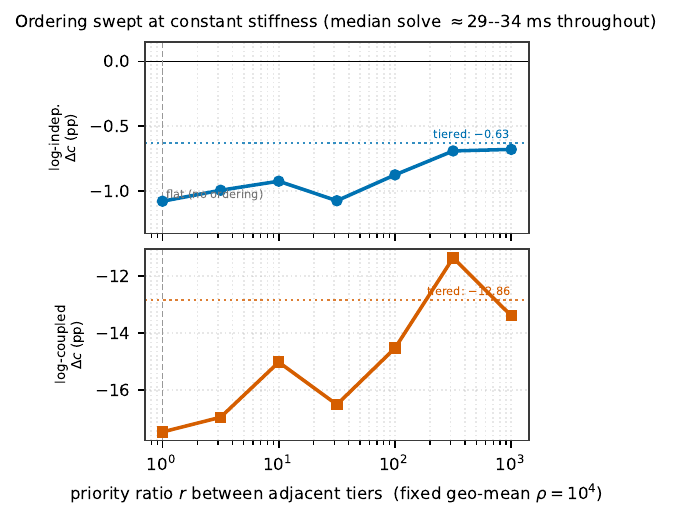}
\caption{Tier-separation mechanism: sweeping the priority ratio $r$ between adjacent tiers at \emph{fixed} geometric-mean stiffness ($\rho{=}10^4$), $40$-scenario subset. As ordering strengthens ($r{=}1$ flat $\to$ $r{=}10^3$), both the log-independent (top) and log-coupled (bottom) deficits improve toward the tiered baseline (dotted) while median solve time stays ${\approx}30$\,ms; the improvement is from ordering, not stiffness.}
\label{fig:cep_tier_sweep}
\end{figure}

The priority bias is also visible as an \emph{auditable, fleet-level} property (Fig.~\ref{fig:cep_audit_panel}): logging each surrogate's residual normalized by its own threshold, the safety/regulatory surrogates are relaxed on only a small fraction of steps (collision $2.5\%$, road-edge $4.9\%$, speed $1.6\%$) while the low-priority route surrogate absorbs most of the relaxation (${\approx}17\%$)---exactly the ordering the tiering is designed to induce, measured across the fleet.

\begin{figure*}[t]
\centering
\includegraphics[width=\textwidth]{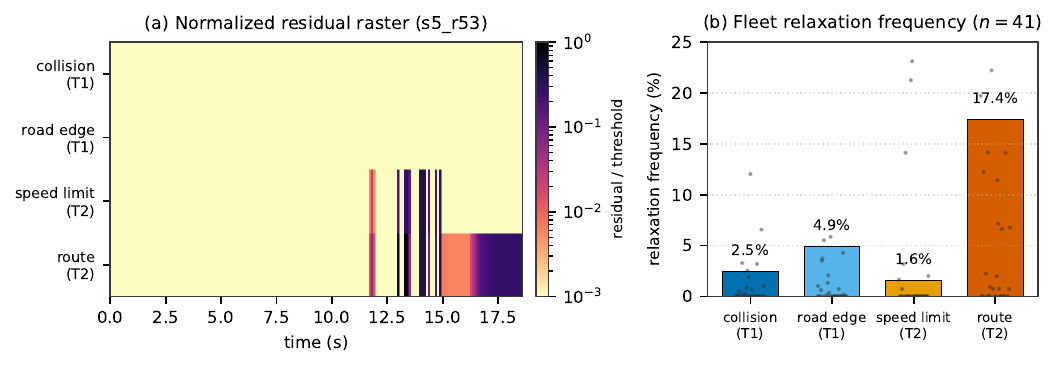}
\caption{Auditability as a measured property. \textbf{(a)}~Threshold-normalized per-surrogate residual raster for one scenario---every cell is a residual divided by its own threshold, so the single colorbar is meaningful across rows: the safety surrogates stay inactive while the low-priority route surrogate carries the late-scene relaxation. \textbf{(b)}~Per-surrogate relaxation frequency across the audited fleet, with per-scenario spread.}
\label{fig:cep_audit_panel}
\end{figure*}

\subsection{The compliance gap is log-coupled}
\label{subsec:decomp}
Fig.~\ref{fig:decomp_groups} shows the mean paired difference $\overline{\Delta c}=c(\pi_{\mathrm{mpc}})-c(\pi_{\star})$ by taxonomy group. The result is stark. The two \emph{log-coupled} rules drop by $\overline{\Delta c}_{\log}=-10.1\pp$, whereas the sixteen \emph{log-independent} safety/regulatory rules show no systematic group-level deficit ($\overline{\Delta c}_{\mathrm{inv}}=-0.04\pp$); the controller-defined group lies between ($-0.75\pp$). We state the second finding as a bound rather than as parity: the bootstrap $95\%$ CI, $[-1.07,+0.52]\pp$, places any systematic safety/regulatory deficit below $1.1\pp$---an order of magnitude under the log-coupled deficit, whose CI is $[-14.0,-6.6]\pp$. The two intervals do not overlap.

We rest the inference on those non-overlapping CIs rather than on a single effect-size number. Cohen's $d=-0.62$ between the two groups is reported as descriptive only, because the groups have very different intrinsic variance ($\mathcal{R}_{\log}$ averages two rules, $\mathcal{R}_{\mathrm{inv}}$ sixteen). For context, an effect of this size needs $n{\approx}23$ scenarios for $80\%$ power, or $n{\approx}42$ under a more conservative two-sample assumption; at $n{=}150$ the study is amply powered. Across scenarios the log-coupled deficit is heavy-tailed---its mean dragged down by a minority of high-divergence scenes---while the log-independent difference stays tight around zero (Fig.~\ref{fig:decomp_groups}).

This is Corollary~\ref{cor:premium} made concrete. Because W-SQP matches expert replay on the log-independent rules ($\delta_{\mathrm{ind}}\approx0$), the entire aggregate gap collapses to the imitation premium---here $\widehat{\Delta}_{\mathrm{imit}}=10.1\pp$. W-SQP is not a worse driver; it is a \emph{different} driver, penalized only for not being the recorded human (Prop.~\ref{prop:dominance}, Prop.~\ref{prop:nonident}).

The per-rule view (Table~\ref{tab:per_rule_compliance}) confirms the deficit is not diffuse. The two largest negative deltas are path tracking ($^{0}r_{0}$, $-10.7\pp$) and route adherence ($^{2}r_{2}$, $-9.5\pp$)---both at $100\%$ for expert replay by construction. The log-independent rules move in both directions by small amounts (headway $+2.6$, clearance $+1.9$, against speed limit $-2.3$, drivable surface $-2.1$) and cancel at the group level, which is why the claim is a group-level one, not per-rule parity.

The small genuine regressions are not spread across the dataset either. The three largest significant safety/regulatory deficits---non-traversable surface, drivable surface, and speed limit (Table~\ref{tab:two_deficits})---concentrate in the same high-divergence scenarios as the log-coupled penalty. They correlate with ego--log divergence at $r{=}{-}0.90$, averaging $-4.8\pp$ in the high-divergence half of scenarios but exactly $0.0\pp$ in the low-divergence half. These are recovery transients in the hardest scenes, not a broad safety regression: in half the scenarios the safety/regulatory deficit simply vanishes. Benjamini--Hochberg correction flags $10$ of $25$ rules, the two log-coupled rules carrying the smallest $q$-values.

\begin{table}[t]
\centering
\caption{Two kinds of deficit, separated. \textbf{(A)}~the \emph{log-coupled imitation penalty}: rules defined as deviation from the human log, which expert replay satisfies at $100\%$ by construction. \textbf{(B)}~absolute regressions on genuine log-independent safety/regulatory rules (BH-significant, negative), offset by improvements elsewhere so the group nets to $-0.04\pp$; these concentrate in the highest-divergence scenes (Sec.~\ref{subsec:decomp}). $\Delta$: paired mean (W-SQP$-$expert), pp.}
\label{tab:two_deficits}
\small\renewcommand{\arraystretch}{1.15}\setlength{\tabcolsep}{5pt}
\begin{tabular}{l l r}
\toprule
Rule & Group & $\Delta$ (pp) \\
\midrule
\multicolumn{3}{l}{\emph{(A) Log-coupled imitation penalty}}\\
Path tracking ($^{0}r_{0}$)   & log-coupled & $-10.7$ \\
Route adherence ($^{2}r_{2}$) & log-coupled & $-9.5$ \\
\midrule
\multicolumn{3}{l}{\emph{(B) Absolute safety/regulatory regressions}}\\
Non-traversable surface ($^{9}r_{1}$) & log-indep. & $-2.9$ \\
Speed limit ($^{3}r_{0}$)             & log-indep. & $-2.3$ \\
Drivable surface ($^{7}r_{0}$)        & log-indep. & $-2.1$ \\
Bike-lane encroachment ($^{10}r_{5}$) & log-indep. & $-1.0$ \\
\bottomrule
\end{tabular}
\end{table}

\begin{figure}[t]
\centering
\includegraphics[width=\columnwidth]{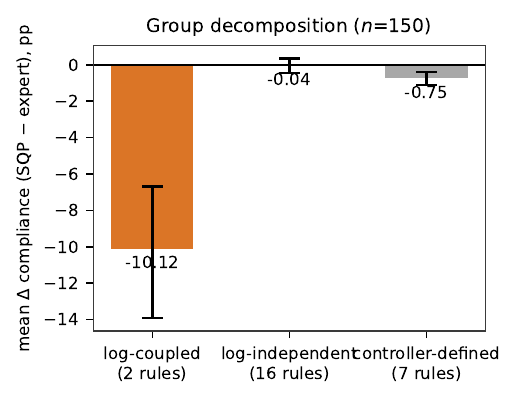}
\caption{Mean paired compliance difference (W-SQP $-$ expert) by taxonomy group, bootstrap $95\%$ CIs ($n{=}150$). The gap is confined to the two log-coupled rules; the log-independent group shows no systematic deficit and has a tight per-scenario distribution around zero, whereas the log-coupled group is heavy-tailed. Individual rules move both ways (Table~\ref{tab:two_deficits}).}
\label{fig:decomp_groups}
\end{figure}

\begin{table*}[t]
\centering
\caption{Per-rule compliance (\%) over 150 paired WOMD scenarios. \textbf{Grp}: taxonomy group (Sec.~\ref{sec:taxonomy}); \textbf{Exp}/\textbf{SQP}: mean compliance for expert replay / W-SQP; $\Delta$: paired mean (SQP$-$Exp), pp; \textbf{$q$}: Benjamini--Hochberg FDR-adjusted Wilcoxon $q$-value ($m{=}25$; \textbf{bold} if $q{\le}0.05$). A `--' marks a rule never violated by either policy (zero variance, no test); such rules still enter the $m{=}25$ family, which only makes the correction more conservative.}
\label{tab:per_rule_compliance}
\small\renewcommand{\arraystretch}{1.0}\setlength{\tabcolsep}{4pt}
\begin{tabular}{c l l l r r r r}
\toprule
\textbf{Lvl} & \textbf{Rule} & \textbf{Name} & \textbf{Grp} & \textbf{Exp} & \textbf{SQP} & \boldmath$\Delta$ & \textbf{$q$} \\
\midrule
10 & $^{10}r_{0}$ & VRU collision avoidance & log-indep. & 98.90 & 99.10 & $+0.20$ & 0.867 \\
10 & $^{10}r_{4}$ & Cyclist passing distance & log-indep. & 99.26 & 98.91 & $-0.35$ & 0.304 \\
10 & $^{10}r_{5}$ & Bike lane encroachment & log-indep. & 99.04 & 98.08 & $-0.96$ & \textbf{0.008} \\
\midrule
9 & $^{9}r_{0}$ & Vehicle collision avoidance & log-indep. & 98.01 & 98.53 & $+0.52$ & 0.208 \\
9 & $^{9}r_{1}$ & Non-traversable surface & log-indep. & 99.98 & 97.12 & $-2.85$ & \textbf{0.002} \\
\midrule
8 & $^{8}r_{0}$ & Stop sign compliance & log-indep. & 99.07 & 98.79 & $-0.28$ & 0.222 \\
8 & $^{8}r_{1}$ & Crosswalk yield & log-indep. & 93.48 & 93.89 & $+0.41$ & 0.362 \\
\midrule
7 & $^{7}r_{0}$ & Drivable surface & log-indep. & 100.00 & 97.94 & $-2.06$ & \textbf{0.008} \\
7 & $^{7}r_{1}$ & Traffic light compliance & log-indep. & 97.82 & 98.11 & $+0.29$ & 0.660 \\
7 & $^{7}r_{9}$ & Gore area avoidance & log-indep. & 96.73 & 98.33 & $+1.60$ & 0.362 \\
\midrule
6 & $^{6}r_{4}$ & Elephant racing & log-indep. & 100.00 & 100.00 & $+0.00$ & -- \\
\midrule
3 & $^{3}r_{0}$ & Speed limit & log-indep. & 99.50 & 97.25 & $-2.25$ & \textbf{0.008} \\
3 & $^{3}r_{3}$ & Following headway & log-indep. & 85.93 & 88.53 & $+2.59$ & \textbf{0.005} \\
3 & $^{3}r_{4}$ & Lateral clearance & log-indep. & 54.99 & 56.85 & $+1.85$ & 0.304 \\
3 & $^{3}r_{6}$ & Lane intrusion & log-indep. & 81.04 & 80.74 & $-0.30$ & 0.867 \\
\midrule
2 & $^{2}r_{0}$ & Time window compliance & controller & 64.97 & 66.27 & $+1.30$ & \textbf{0.005} \\
2 & $^{2}r_{2}$ & Route adherence & log-coupled & 100.00 & 90.50 & $-9.50$ & \textbf{$<$0.001} \\
\midrule
1 & $^{1}r_{0}$ & Yield to priority & log-indep. & 95.07 & 96.07 & $+1.00$ & 0.055 \\
1 & $^{1}r_{11}$ & Lateral acceleration & controller & 99.96 & 99.98 & $+0.02$ & 0.867 \\
1 & $^{1}r_{3}$ & Turn signal advisory & controller & 100.00 & 100.00 & $+0.00$ & -- \\
1 & $^{1}r_{4}$ & Lane selection & controller & 100.00 & 100.00 & $+0.00$ & -- \\
1 & $^{1}r_{9}$ & Jerk limit & controller & 96.31 & 92.57 & $-3.74$ & \textbf{0.002} \\
\midrule
0 & $^{0}r_{0}$ & Path tracking & log-coupled & 100.00 & 89.27 & $-10.73$ & \textbf{$<$0.001} \\
0 & $^{0}r_{2}$ & Longitudinal comfort & controller & 99.85 & 97.02 & $-2.83$ & \textbf{$<$0.001} \\
0 & $^{0}r_{3}$ & Lateral comfort & controller & 99.95 & 99.95 & $+0.00$ & 1.000 \\
\bottomrule
\end{tabular}
\end{table*}

\subsection{The aggregate is unfair; the log-independent score repairs it}
\label{subsec:instrument}
The decomposition has a direct consequence for ranking. Fig.~\ref{fig:verdict_flip} plots, per scenario, the naive aggregate $\Delta c$ (all $25$ rules) against the log-independent $\Delta c$. Almost every point lies below the parity line: the aggregate scores W-SQP harsher than the fair score, and the offset is the log-coupled contribution. Converting to verdicts with a $\pm0.5\pp$ dead-band, the naive aggregate records W-SQP as losing $44/150$ scenarios and winning $32$ (mean $-1.04\pp$); under the log-independent score the verdict is balanced---$30$ losses, $34$ wins (mean $-0.04\pp$)---with $14$ scenarios flipped out of ``loss.'' A benchmark using the aggregate would report a net-negative, misleading ranking for a planner whose safety and regulatory compliance is, at the dataset level, statistically indistinguishable from the human's (Table~\ref{tab:two_deficits} details the localized exceptions). Reporting the log-independent score, with the log-coupled score separately as an imitation metric, removes the bias with no change to the simulator or agents---though, as Sec.~\ref{sec:discussion} discusses, it presupposes an agreed rule classification and calibrated thresholds.

\textbf{The confound is not an artifact of equal weighting.} One might object that an unweighted mean over $25$ heterogeneous rules is a straw aggregator, since rulebooks are priority-ordered and the log-coupled rules sit at the \emph{lowest} levels ($0$ and $2$). Re-weighting the aggregate by priority ($w_r\propto(\ell_r{+}1)^p$, $\ell_r$ the rule's level) shrinks but does not remove the gap: the mean W-SQP$-$expert gap moves from $-1.04\pp$ (equal weights) to $-0.52\pp$ ($p{=}1$) and $-0.41\pp$ ($p{=}2$), and the two lowest-priority log-coupled rules still account for $78\%$ of the equal-weight gap. By Corollary~\ref{cor:premium} the imitation premium is positive for \emph{any} weighting with $w_r>0$ on a log-coupled rule; only excluding those rules removes it cleanly.

\begin{figure}[t]
\centering
\includegraphics[width=\columnwidth]{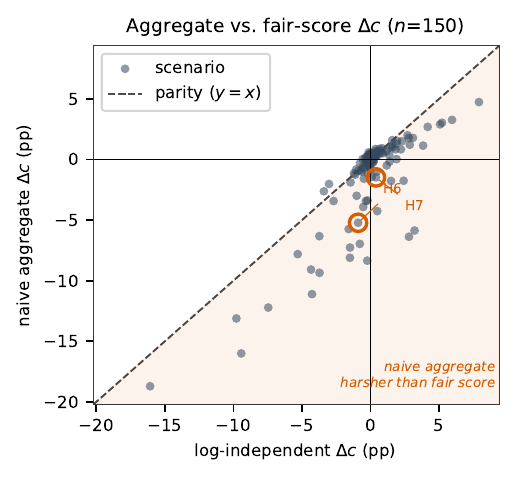}
\caption{Per-scenario naive aggregate $\Delta c$ vs.\ the log-independent (fair) $\Delta c$; points below parity are scenarios the aggregate penalizes beyond the fair score. Under a $\pm0.5\pp$ dead-band the naive score records W-SQP at $44$ losses / $32$ wins, the fair score at $30$ / $34$---flipping $14$ scenarios out of ``loss.'' Case-study scenarios circled.}
\label{fig:verdict_flip}
\end{figure}

\subsection{Falsification controls}
\label{subsec:falsify}
\textbf{The deficit is a predictable function of divergence.} The magnitudes above are for two specific planners; what would a \emph{less-divergent} planner---the kind that tops a leaderboard---incur? To answer, we build a continuum between non-imitative W-SQP ($\beta{=}0$) and expert replay ($\beta{\to}1$) by blending W-SQP's executed pose toward the logged pose with weight $\beta$, and trace the deficit against mean ego--log divergence (Fig.~\ref{fig:divergence_sweep}). The blend is a synthetic interpolation used only to parameterize divergence---the blended poses are not driven through the bicycle model---so the curve characterizes divergence versus deficit, not a family of realizable plans. The relationship is monotone and saturating: the deficit falls smoothly toward zero as divergence shrinks, vanishing only in the imitative limit, while log-independent compliance stays at parity throughout ($-0.6$ to $+0.1\pp$).

Two things follow. A planner that stays closer to the log incurs a smaller deficit, but by Prop.~\ref{prop:dominance} never a zero one. And the mapping is not universal across planners: PDM-Closed sits \emph{above} this curve, with a larger deficit ($-32\pp$) at the same mean divergence ($2.2$\,m). The reason is that the deficit depends on the \emph{distribution} of divergence, not its mean---bursty excursions cross the threshold more often than smoothly spread deviations of equal average. Divergence predicts the deficit within a planner family but does not reduce it to one number across families.

\begin{figure}[t]
\centering
\includegraphics[width=\columnwidth]{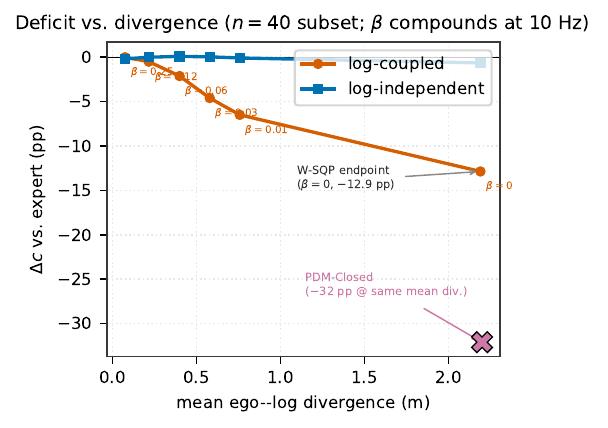}
\caption{Log-coupled and log-independent $\Delta c$ vs.\ mean ego--log divergence ($n{=}40$ subset), sweeping the imitation-blend weight from $\beta{=}0$ (W-SQP, divergence $2.2$\,m) to $\beta{=}0.25$ (divergence ${\approx}0$). The log-coupled deficit is a monotone, saturating function of divergence while log-independent compliance stays at parity; the PDM-Closed marker ($\times$) sits above the curve.}
\label{fig:divergence_sweep}
\end{figure}

\textbf{It is not interaction density.} If the deficit reflected difficulty in crowds, it should grow with interaction density. It does the opposite. Regressing the log-coupled $\Delta c$ on the exogenous scene density of Sec.~\ref{sec:setup} gives a \emph{positive} slope ($+0.45\pp$/agent at $R{=}30$\,m, $p{=}0.020$), while the log-independent slope is flat (Fig.~\ref{fig:density_falsify}). A mediation analysis explains the sign: divergence drives the deficit ($r{=}-0.68$), but dense traffic produces \emph{less} divergence ($r{=}-0.22$), because it funnels the ego onto the only feasible path---close to the human's---whereas sparse scenes leave room for a valid alternative. The only proxy that yields the ``expected'' negative slope is the planner's own solve time, an endogenous difficulty signal and precisely the circular measure an exogenous density is designed to avoid. Since the divergence-to-deficit link is partly tautological, we rest the main claim on the decomposition; this control establishes only the negative.

\textbf{It is not the velocity predictor.} W-SQP predicts other agents at their current constant velocity. Replacing this with a log-informed horizon-average velocity on a $30$-scenario subset barely moves the log-coupled difference ($-10.3\pp\!\to\!-11.5\pp$) and leaves the log-independent group unchanged. The check is limited to $30$ scenarios and to one alternative estimate, so it supports robustness to the velocity model without ruling out prediction effects in general.

\textbf{It is not specific to W-SQP.} We repeat the whole decomposition on the two non-imitative baselines; the question now is not their control quality (Sec.~\ref{subsec:headtohead}) but whether the \emph{confound} itself reproduces. It does, on both (Table~\ref{tab:generality}): the log-coupled deficit is large (PP+IDM $-67.0\pp$; PDM-Closed $-32.1\pp$) and far exceeds the log-independent one in each case. For the baselines, we claim only this reproducibility, not safety parity---their log-independent \emph{means} are negative ($-1.9$ and $-2.2\pp$), so they do drive somewhat less safely than the human on average, and PDM-Closed's route corridor pushes it off-road in $26\%$ of scenarios (a full roadgraph route would fix this). Only W-SQP reaches group-level log-independent parity. But the log-coupled deficit reproduces across three planners that share nothing except being non-imitative, which is the point: the deficit is a property of the metric, not of any one controller.

\textbf{It is not a threshold artifact.} The log-coupled rules fire when deviation from the logged pose exceeds a threshold $\tau$ (nominally $2.4$\,m). Recomputing path-tracking compliance as $\tau$ sweeps over $[1,6]$\,m leaves the deficit large and negative everywhere: $[-26,-8]\pp$ for W-SQP and $[-83,-53]\pp$ for PP+IDM, with expert replay at $100\%$ for every $\tau$ by construction. No reasonable threshold removes the premium.

\textbf{It is not scenario selection.} We rank the main scenarios by complexity, which might correlate with divergence. Re-running the decomposition on a uniformly random sample of $50$ WOMD scenarios preserves the effect and, if anything, sharpens it: $\overline{\Delta c}_{\log}=-16.7\pp$ against $\overline{\Delta c}_{\mathrm{inv}}=-1.0\pp$ (median $0.0\pp$). Selecting by complexity therefore \emph{understates} the deficit rather than manufacturing it, consistent with the finding that sparser scenes admit more divergence.

\begin{table}[t]
\centering
\caption{Generality across planners: group compliance difference vs.\ expert replay ($n{=}150$, pp). The log-coupled $\gg$ log-independent pattern reproduces on a reactive controller and a recognized PDM-Closed--style planner. The log-independent row shows \textbf{mean (median)}: the negative \emph{means} on the baselines are pulled by a minority of high-divergence scenarios, while the \emph{medians} sit near zero---the robustness message.}
\label{tab:generality}
\small\renewcommand{\arraystretch}{1.15}\setlength{\tabcolsep}{5pt}
\begin{tabular}{l r r r}
\toprule
Group & W-SQP & PP+IDM & PDM-C \\
\midrule
log-coupled (2), mean          & $-10.1$ & $-67.0$ & $-32.1$ \\
log-indep.\ (16), mean   & $-0.04$ & $-1.9$  & $-2.2$ \\
\quad\emph{(median)}           & $-0.04$ & $-0.15$ & $-0.17$ \\
controller (7), mean           & $-0.75$ & $+2.5$  & $-1.9$ \\
\bottomrule
\end{tabular}
\end{table}

\begin{figure}[t]
\centering
\includegraphics[width=\columnwidth]{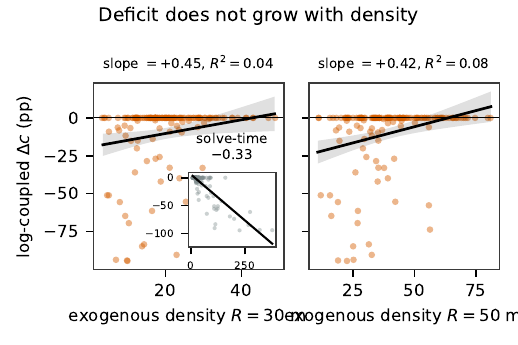}
\caption{Log-coupled $\Delta c$ vs.\ exogenous interaction density at $R{=}30$ and $50$\,m: positive/flat slopes (the deficit does not grow with crowding). Inset: the circular solve-time proxy gives the opposite, misleading slope.}
\label{fig:density_falsify}
\end{figure}

\subsection{Case studies}
\label{subsec:cases}
Two representative scenarios make the mechanism concrete (Table~\ref{tab:heroes}): a stop-controlled junction (H6, Figs.~\ref{fig:hero6}--\ref{fig:hero6_raster}) and dense vehicle traffic (H7, $290$ vehicles). In both, the trajectory overlay shows W-SQP taking a valid, different path from the logged human---a different lane or a different gap --- and the per-rule panel shows the log-coupled rules lit up for W-SQP while its safety and regulatory rules stay close to the expert's. The per-scenario numbers tell the same story: a large log-coupled deficit beside near-parity log-independent compliance (H6 $\Delta_{\mathrm{inv}}{=}{-}0.9$, H7 $+0.4\pp$, both inside the dataset-level CI). These are exactly the instances of Prop.~\ref{prop:dominance}---a valid alternative path scored down purely for not being the recorded human's.

\begin{table}[t]
\centering
\caption{Case-study scenarios (complex, rule-rich scenes). \textbf{Veh}: \# vehicles; \textbf{Brd}: \# rules applicable $>$$10\%$ of steps; $\Delta_{\log}$, $\Delta_{\mathrm{inv}}$: per-scenario group compliance differences (pp; $\Delta_{\mathrm{inv}}\!>\!0$ means W-SQP is \emph{safer} than the human); \textbf{div}: mean ego--log divergence (m).}
\label{tab:heroes}
\small\renewcommand{\arraystretch}{1.1}\setlength{\tabcolsep}{4pt}
\begin{tabular}{l l r c r r r}
\toprule
 & Scenario & Veh & Brd & $\Delta_{\log}$ & $\Delta_{\mathrm{inv}}$ & div \\
\midrule
H6 & Stop-controlled           & 58  & 25 & $-36.0$ & $-0.9$ & $4.6$ \\
H7 & Dense vehicle traffic     & 290 & 24 & $-15.0$ & $+0.4$ & $9.8$ \\
\bottomrule
\end{tabular}
\end{table}

\begin{figure*}[t]
\centering
\begin{subfigure}{0.32\textwidth}\includegraphics[width=\linewidth]{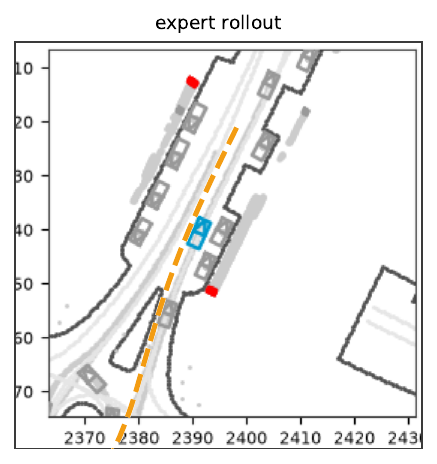}\end{subfigure}\hfill
\begin{subfigure}{0.32\textwidth}\includegraphics[width=\linewidth]{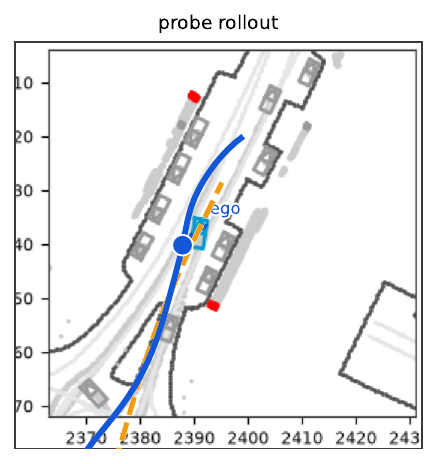}\end{subfigure}\hfill
\begin{subfigure}{0.30\textwidth}\includegraphics[width=\linewidth]{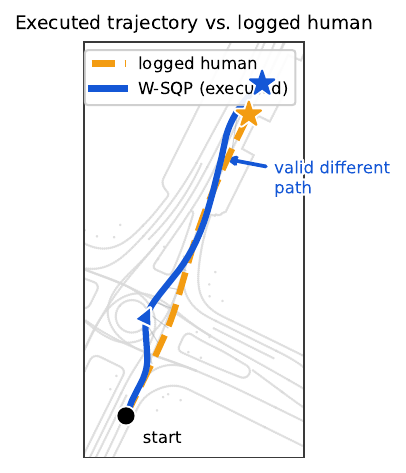}\end{subfigure}
\caption{\textbf{H6 --- Stop-controlled.} Expert vs.\ W-SQP rollouts at maximum divergence (left, centre) and the executed-vs-logged trajectory overlay (right). W-SQP takes a valid alternative line with near-parity safety ($\Delta_{\mathrm{inv}}{=}{-}0.9\pp$) and a $-36.0\pp$ log-coupled deficit; the per-rule breakdown is in Fig.~\ref{fig:hero6_raster}.}
\label{fig:hero6}
\end{figure*}

\begin{figure}[t]
\centering
\includegraphics[width=\columnwidth]{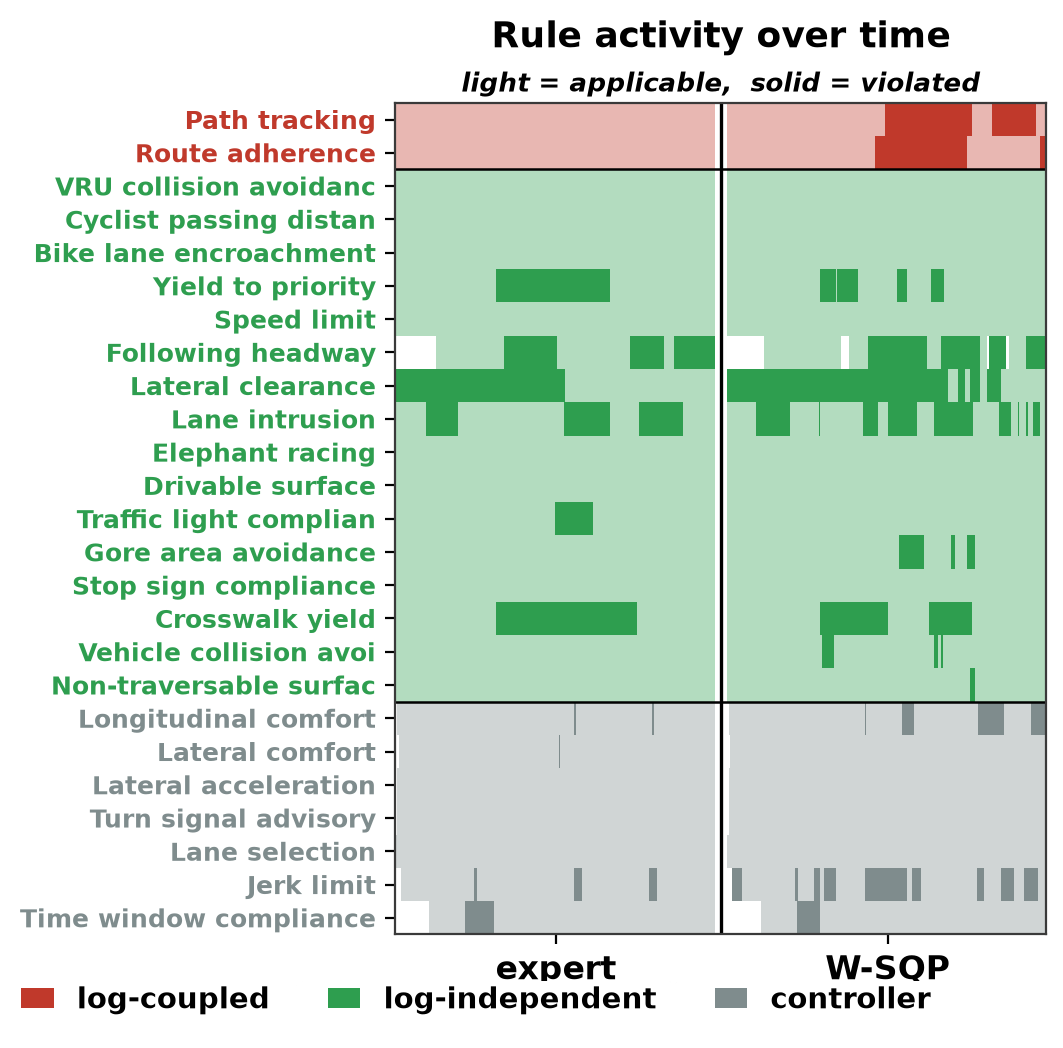}
\caption{\textbf{H6 --- Rule-activity timeline} (expert $\vert$ W-SQP; light = applicable, solid = violated). The log-coupled rules (path tracking, route adherence) are elevated for W-SQP while the safety and regulatory rules stay close to expert replay: the $-36.0\pp$ log-coupled deficit of Fig.~\ref{fig:hero6} is confined to the imitation rules.}
\label{fig:hero6_raster}
\end{figure}

\begin{figure*}[t]
\centering
\begin{subfigure}{0.32\textwidth}\centering\includegraphics[width=\linewidth,keepaspectratio]{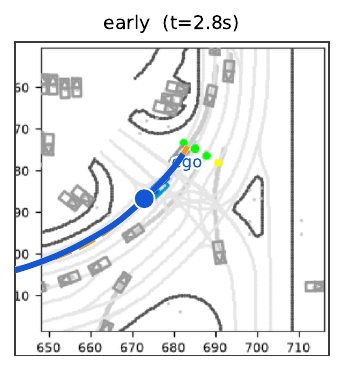}\end{subfigure}\hfill
\begin{subfigure}{0.32\textwidth}\centering\includegraphics[width=\linewidth,keepaspectratio]{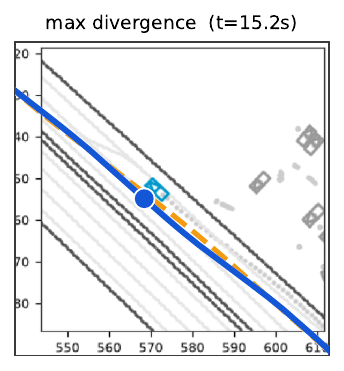}\end{subfigure}\hfill
\begin{subfigure}{0.32\textwidth}\centering\includegraphics[width=\linewidth,keepaspectratio]{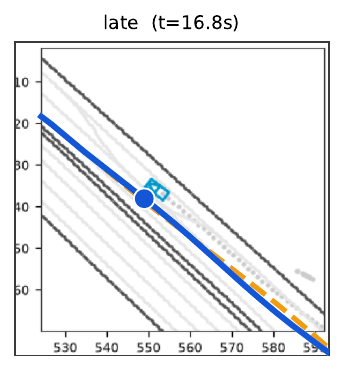}\end{subfigure}

\vspace{3pt}
\includegraphics[width=0.62\textwidth,keepaspectratio]{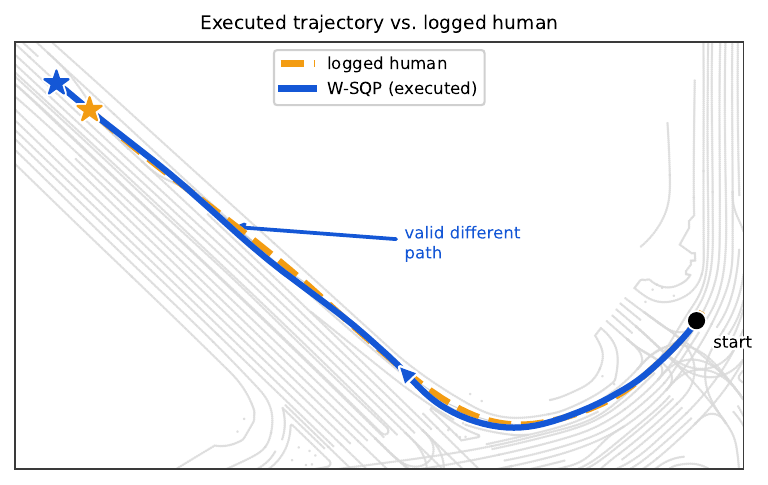}
\caption{\textbf{H7 --- Dense vehicle traffic ($290$ vehicles).} Bird's-eye keyframes (early / max-divergence / late) and the executed-vs-logged overlay. W-SQP accepts a different gap and is marginally \emph{safer} ($\Delta_{\mathrm{inv}}{=}{+}0.4\pp$) while its log-coupled compliance drops $-15.0\pp$; the per-rule mechanism over time is in Fig.~\ref{fig:h7_vignette}.}
\label{fig:hero7}
\end{figure*}

Fig.~\ref{fig:h7_vignette} tracks the same scenario over time. W-SQP's path-tracking deviation climbs to $7.7$\,m and stays above the $2.4$\,m threshold for much of the rollout, so the log-coupled rule fires continuously. Yet over the same interval its physical margins---collision closeness, speed over the limit, headway---track expert replay and stay safe. The controller is penalized for tens of seconds while nothing physical is wrong.

\begin{figure}[t]
\centering
\includegraphics[width=0.9\columnwidth]{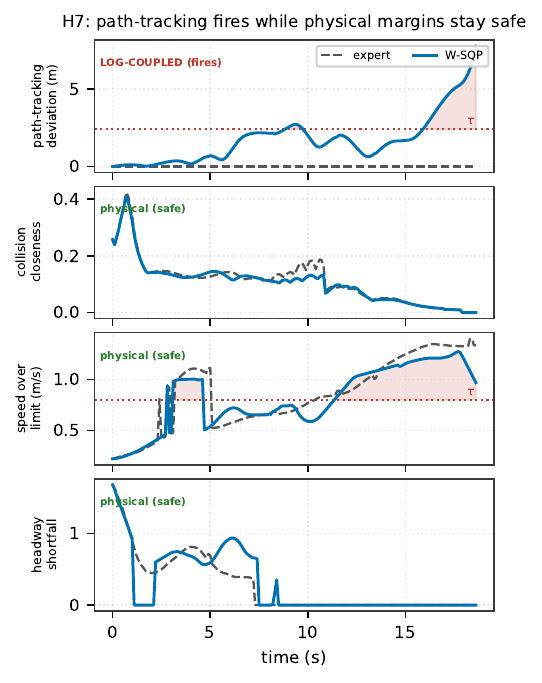}
\caption{Prop.~\ref{prop:dominance} in time (H7, W-SQP solid vs.\ expert dashed; all quantities ``higher $=$ worse''). \textbf{Top:} the log-coupled path-tracking deviation exceeds its threshold $\tau{=}2.4$\,m (red) for much of the rollout. \textbf{Below:} the physical margins (collision closeness, speed-over-limit, headway shortfall) track expert replay and stay safe throughout.}
\label{fig:h7_vignette}
\end{figure}

\subsection{Summary}
A competent non-imitative planner is penalized almost entirely on rules that measure deviation from the human log, while its safety and regulatory compliance is statistically indistinguishable from expert replay. The effect is large; it is neither interaction density (which reverses the sign) nor prediction quality (in our ablation). Two claims should be kept apart. The \emph{structural} premium---expert replay scores the log-coupled rules at $100\%$ by construction, so any non-imitative planner pays a positive premium under any catalog that weights such a rule---holds in general. The \emph{magnitude} ($-10.1\pp$) is specific to this controller and scenario set. An aggregate that includes the log-coupled rules ranks the planner a net loser; the log-independent score ranks it at parity.

\section{Discussion}
\label{sec:discussion}
\subsection{Deployment and practical relevance}
As autonomous systems move toward deployment, planner decisions must be intelligible not only to the developers who tune them but to the safety engineers and investigators who must justify them. Being able to establish, after the fact, which rule was relaxed and by how much is thus an engineering requirement, not a reporting convenience. W-SQP offers three properties toward it. It is computationally tractable on the evaluated workstation---median solve time about $28$\,ms, maximum $104$\,ms under the $90$\,ms limit, with dynamics projection guaranteeing a model-consistent next pose even for a non-converged iterate. It records threshold-normalized residuals for every surrogate, making high-priority relaxations visible for review (Fig.~\ref{fig:cep_audit_panel}). And its tier weights are explicit tuning knobs for the priority bias. None of this amounts to a safety guarantee: the weights do not impose lexicographic priority, the soft surrogates do not certify safety, and deployment still needs embedded profiling and validation with sensing, estimation, and actuation in the loop.

\subsection{A log-independent compliance score}
Our results point to a simple, low-cost remedy. Because the confounded rules form an explicit \emph{a priori} subset, a benchmark can report two numbers instead of one: a \emph{log-independent compliance} score over the safety, regulatory, map-, and interaction-defined rules, and, separately, an \emph{imitation} score over the log-coupled rules. The first measures what closed-loop evaluation is meant to measure---did the planner drive safely and legally?---while the second is reported for transparency rather than folded into the ranking. This needs no change to the simulator or agents; it is a relabeling and re-aggregation of metrics benchmarks already compute. It does presuppose agreement on which rules are quality metrics and which are imitation metrics. We therefore recommend that leaderboards mixing the two kinds keep log-coupled metrics out of the aggregate quality score, and recalibrate the retained thresholds before treating them as absolute compliance measures.

The log-independent score must still reward task completion, or a planner could look safe merely by making no progress. A \emph{map-route progress} term supplies this, defined against the assigned route corridor rather than the log---in the manner of nuPlan's progress-along-route ratio and Waymax's route-following metric~\citep{nuplan_2021,gulino_waymax_2023}. Referenced to route geometry and not to $\xi^{\star}_\omega$, it is log-independent by Def.~\ref{def:logindep}. Computed as executed arc-length along the route over route length, it gives expert replay, W-SQP, and PP+IDM comparable scores ($0.93$, $0.91$, $0.89$) while a do-nothing planner scores near zero. This term is a recommendation only: it is computed here post-hoc and is not part of the validated log-independent score of Sec.~\ref{sec:results}, whose verdict flip rests on the safety and regulatory rules alone.

Applied to existing benchmarks, the taxonomy is a simple audit (Table~\ref{tab:benchmarks}), and mature ones already pass it: nuPlan's closed-loop score contains only interaction-, map-, and controller-defined terms, with pointwise matching confined to the \emph{separate} open-loop score, and Waymax reports log-divergence standalone rather than in any aggregate~\citep{nuplan_2021,gulino_waymax_2023,dauner_2023_pdm}. The confound is not a flaw of these benchmarks but a risk for \emph{any} catalogue that folds a log-coupled term into one ranking---as our own $25$-rule aggregate does. To make the fix turnkey we release a reference implementation that, given per-rule compliance and a per-rule taxonomy label, reports the aggregate, the fair score, and the imitation premium, with the verdict under each.

\begin{table}[t]
\centering
\caption{The taxonomy as a benchmark audit. Mature closed-loop scores keep log-coupled terms out of the ranking aggregate; the confound is a risk only for catalogues that do not.}
\label{tab:benchmarks}
\small\renewcommand{\arraystretch}{1.15}\setlength{\tabcolsep}{4pt}
\begin{tabular}{@{}l c p{3.1cm}@{}}
\toprule
Score & in agg.? & note \\
\midrule
nuPlan CLS      & no          & ADE/FDE confined to OLS \\
nuPlan OLS      & (separate)  & pointwise log-matching only \\
Waymax metrics  & no          & log-divergence reported standalone \\
Our catalogue   & \textbf{yes} & path-tracking $+$ route folded in \\
\bottomrule
\end{tabular}
\end{table}

\subsection{Relation to agent-realism findings}
This reframes part of the score deterioration that learned reactive agents induce on existing benchmarks~\citep{hagedorn_nuplan_reactive_2025,peng_nuplan_r_2025}. As agents become more reactive, the recorded human log becomes a less attainable target, so any log-coupled sub-metric reports growing ``violations'' that reflect divergence from an un-followable log rather than degraded driving. Our per-rule decomposition makes this component separable from genuine compliance change and is complementary to that line of work: realism changes the rollout, metric construction changes what the rollout is scored against, and both must be controlled for fair comparison. We do not claim this accounts for the whole of the reported shifts, which are planner- and scenario-dependent; only that our decomposition isolates one component attributable to metric construction rather than to driving quality.

\subsection{Limitations}
\label{subsec:limitations}
The central threat to validity is external. We establish the decomposition for three non-imitative planners, but the evidence still rests on a single evaluator, a single (IDM) reactive-agent model, and one dataset; the magnitude of the log-coupled gap will vary with each. Our PDM-Closed also uses the log path geometry as its route corridor and occasionally leaves the drivable area on complex geometry ($26\%$ of scenarios, inflating its log-independent mean but not its near-parity median). A full roadgraph route, a second evaluator catalogue, and learned reactive agents are the main future work.

A second asymmetry the taxonomy does \emph{not} address is one of policy \emph{reactivity} rather than metric reference. Expert replay cannot respond to the agents, only they to it. So some of W-SQP's small positive deltas on interaction rules (headway $+2.6$, clearance $+1.9\pp$) may reflect replay's passivity rather than better driving---the reactivity counterpart of the log-coupling confound, this time in W-SQP's favor. Since our headline is parity, not superiority, this does not threaten the result; but a fully symmetric comparison would need a reactive reference rather than a replayed one.

The evaluator calibration itself warrants a caveat. Some log-independent rules flag even the logged human at high rates (lateral clearance at ${\sim}55\%$, following headway at ${\sim}86\%$ for expert replay), largely because compliance~\eqref{eq:rule_compliance} normalizes over all steps rather than applicable ones. The log-independent score is therefore best read as a \emph{relative}, controller-vs-expert measure; individual thresholds would need recalibration before absolute use. Our claim is a paired decomposition, so this does not affect the headline. Finally, the scenarios are complexity-selected and W-SQP relaxes rules softly, so compliance is not a formal safety guarantee; the evaluator remains authoritative.

\section{Conclusion}
\label{sec:conclusion}
We have presented W-SQP, an auditable, rulebook-aware NMPC for autonomous driving. The controller maps nine rule families to four shared-slack tiers in one CasADi/IPOPT nonlinear program. Separated quadratic penalties bias violations toward lower-priority rules while actuation bounds remain hard; they do not provide lexicographic priority or formal safety guarantees. Threshold-normalized per-rule residuals provide an audit record for each control cycle.

W-SQP replans from its executed state at $10$\,Hz. A $90$\,ms CPU-time limit returns IPOPT's best iterate, and dynamics projection produces a model-consistent executed pose; median and maximum observed wall-clock solve times on the evaluation workstation were $28$ and $104$\,ms. These results establish anytime-capable operation on the tested platform, not hard-real-time execution. Closed-loop evaluation covered $150$ WOMD scenarios in Waymax and included reactive and proposal-and-select baselines.

The evaluation also separated log-independent safety and regulatory rules from two rules defined relative to expert replay. W-SQP's mean difference on the sixteen log-independent rules was $-0.04\pp$, with no systematic group-level deficit, whereas the log-coupled difference was $-10.1\pp$. Several individual safety rules nevertheless regressed in high-divergence scenarios (Table~\ref{tab:two_deficits}); the group-level result should not be read as per-rule or per-scenario parity.

For future work, we are interested in nondimensionalizing the optimization surrogates, improving the collision geometry and route construction, evaluating with additional agent and metric models, and profiling the controller on embedded automotive hardware. Hardware-in-the-loop and vehicle experiments must also incorporate perception, estimation, communication, and actuation latency, and combining the tiered-slack controller with certified feasibility or reachability mechanisms remains necessary for a formal safety claim.

\appendix
\setcounter{table}{0}
\setcounter{figure}{0}
\section{Rule Catalogue and Threshold Definitions}
\label{app:catalogue}
This appendix lists the implemented evaluator rulebook. For each rule we give its identifier, priority level, short name, violation condition, and threshold $\tau_r$; a rule is violated at step $k$ when it is applicable and its Boolean predicate $\phi_r(\mathcal{S}_k)$ is true. A global strictness multiplier $\kappa=1.25$ tightens selected thresholds, either raising a required margin or lowering an allowed tolerance by $\kappa$.

\begin{table*}[!t]
\centering
\caption{Complete rule catalogue with predicate and threshold definitions for all $25$ evaluator rules. ``Proxy'' marks the \emph{log-coupled} rules (Def.~\ref{def:logcoupled}) that reference the logged expert trajectory. This table makes the per-rule results (Table~\ref{tab:per_rule_compliance}) reproducible from the paper.}
\label{app:rules}
\scriptsize
\setlength{\tabcolsep}{4pt}
\renewcommand{\arraystretch}{0.92}
\begin{tabular}{c l l p{5.9cm} l}
\toprule
Lvl & ID & Name & Violation condition & Threshold \\
\midrule
10 & rule\_10r0 & VRU collision avoidance & Min corner-to-corner distance to any pedestrian/cyclist $<\tau$ & $0.5\kappa$\,m \\
10 & rule\_10r4 & Cyclist passing distance & Effective lateral clearance to cyclist $<\tau$ & $(1.5+0.05v)\kappa$\,m \\
10 & rule\_10r5 & Bike lane encroachment & Min distance from ego corners to bike-lane points $<\tau$ & $0.3\kappa$\,m \\
\midrule
9  & rule\_9r0  & Vehicle collision avoidance & Min corner-to-corner distance to any vehicle $<\tau$ & $0.5\kappa$\,m \\
9  & rule\_9r1  & Non-traversable surface & Road-edge intrusion $>0$ (any intrusion) & 0 \\
\midrule
8  & rule\_8r0  & Stop sign compliance & $d<5$\,m to stop sign AND speed $>0.5$\,m/s & 0.5\,m/s \\
8  & rule\_8r1  & Crosswalk yield & Near crosswalk AND ped-in-crosswalk AND speed $>2.0$\,m/s & 2.0\,m/s \\
\midrule
7  & rule\_7r0  & Drivable surface & Dist.\ to nearest lane point $>\tau$ & 5.0\,m \\
7  & rule\_7r1  & Traffic light compliance & $d<15$\,m to red TL AND speed $>1.0$\,m/s & 1.0\,m/s \\
7  & rule\_7r9  & Gore area avoidance & Dense road-edge points near ego AND $d_{\min}<1.5$\,m & 1.5\,m \\
\midrule
6  & rule\_6r4  & Elephant racing & Side-by-side vehicle at similar speed AND ego speed $>15$\,m/s & 15\,m/s \\
\midrule
3  & rule\_3r0  & Speed limit & Overspeed $>1/\kappa$\,m/s above proxy limit & $1/\kappa$\,m/s \\
3  & rule\_3r3  & Following headway & THW $<2\kappa$\,s OR TTC $<3\kappa$\,s & $(2\kappa,\,3\kappa)$\,s \\
3  & rule\_3r4  & Lateral clearance & Min lateral clearance to objects $<\tau$ & $1.0\kappa$\,m \\
3  & rule\_3r6  & Lane intrusion & Lateral TTC $<\tau$ & $2.0\kappa$\,s \\
\midrule
2  & rule\_2r0  & Time window compliance & Avg speed over last 2\,s $<0.5$\,m/s after $t>5$\,s & 0.5\,m/s \\
2  & rule\_2r2  & Route adherence (proxy) & Position dev $>5/\kappa$\,m OR yaw dev $>0.5/\kappa$\,rad & $(5/\kappa,\,0.5/\kappa)$ \\
\midrule
1  & rule\_1r0  & Yield to priority & Cross-traffic conflict heuristic (proximity + speed) & --- \\
1  & rule\_1r3  & Turn signal advisory & Advisory-only (yaw-rate proxy; never violated) & --- \\
1  & rule\_1r4  & Lane selection & Advisory placeholder (never violated) & --- \\
1  & rule\_1r9  & Jerk limit & Peak longitudinal jerk $>\tau$ & 5.0\,m/s$^3$ \\
1  & rule\_1r11 & Lateral acceleration & $|v\cdot\dot\psi|>\tau$ & 3.0\,m/s$^2$ \\
\midrule
0  & rule\_0r0  & Path tracking (proxy) & L2 position error to log $>3/\kappa$\,m & $3/\kappa$\,m \\
0  & rule\_0r2  & Longitudinal comfort & Deceleration $>3/\kappa$\,m/s$^2$ (harsh braking) & $3/\kappa$\,m/s$^2$ \\
0  & rule\_0r3  & Lateral comfort & Lateral jerk $>\tau$ & 3.0\,m/s$^3$ \\
\bottomrule
\end{tabular}
\end{table*}

\paragraph{Navigation-alignment proxy rules.} Rules 0r0 and 2r2 measure similarity to the logged trajectory rather than a legal or safety requirement: path tracking (0r0) uses $\|p_{\mathrm{ego}}-p_{\mathrm{log}}\|$ against $3.0/\kappa$, and route adherence (2r2) uses position ($5.0/\kappa$) and yaw ($0.5/\kappa$) deviation. Retained for diagnostics, they nevertheless favor expert replay and must be read accordingly in any aggregate.

\section*{CRediT authorship contribution statement}
\textbf{Hadi Hajieghrary:} Conceptualization, Methodology, Software, Formal analysis, Investigation, Data curation, Validation, Visualization, Writing -- original draft, Writing -- review \& editing.
\textbf{Benedikt Walter:} Software, Investigation, Validation, Writing -- review \& editing.
\textbf{Chaitanya Shinde:} Software, Investigation, Validation, Writing -- review \& editing.
\textbf{Paul Schmitt:} Conceptualization, Methodology, Writing -- review \& editing.
\textbf{Miguel Hurtado:} Resources, Supervision, Project administration, Writing -- review \& editing.

\section*{Declaration of competing interest}
The authors declare that they have no known competing financial interests or personal relationships that could have appeared to influence the work reported in this paper.

\section*{Declaration of generative AI in the writing process}
During the preparation of this work the authors used a large language model to assist with drafting, editing, and figure generation. The authors reviewed and edited all content and take full responsibility for the content of the publication.

\section*{Data availability}
This study is built on the Waymo Open Motion Dataset~\citep{ettinger_womd_iccv_2021}, which is publicly available under its dataset license. The W-SQP controller, the tiered-slack NLP formulation, the closed-loop simulation harness, the $25$-rule evaluator, and the fair (log-independent) scoring reference implementation will be released in a public repository upon publication; scenario indices and configuration files needed to reproduce the reported runs are included.

\IfFileExists{elsarticle-harv.bst}
  {\bibliographystyle{elsarticle-harv}}
  {\bibliographystyle{plainnat}}
\bibliography{References}

@inproceedings{censi_rulebooks_2019,
  title     = {Liability, Ethics, and Culture-Aware Behavior Specification using Rulebooks},
  author    = {Censi, Andrea and Slutsky, Konstantin and Wongpiromsarn, Tichakorn and Yershov, Dmitry and Pendleton, Scott and Fu, James and Frazzoli, Emilio},
  booktitle = {2019 International Conference on Robotics and Automation (ICRA)},
  year      = {2019},
  pages     = {8536--8542},
}

@INPROCEEDINGS{collin_sotif_rulebooks_2021,
  author={Collin, Anne and Bilka, Artur and Pendleton, Scott and Tebbens, Radboud Duintjer},
  booktitle={2020 IEEE Intelligent Vehicles Symposium (IV)}, 
  title={Safety of the Intended Driving Behavior Using Rulebooks}, 
  year={2020},
  volume={},
  number={},
  pages={136-143},
}

@inproceedings{xiao_rulebased_oc_2021,
  title={Rule-based optimal control for autonomous driving},
  author={Xiao, Wei and Mehdipour, Noushin and Collin, Anne and Bin-Nun, Amitai Y and Frazzoli, Emilio and Tebbens, Radboud Duintjer and Belta, Calin},
  booktitle={Proceedings of the ACM/IEEE 12th International Conference on Cyber-Physical Systems},
  pages={143--154},
  year={2021}
}

@inproceedings{maierhofer_interstate_2020,
  title     = {Formalization of Interstate Traffic Rules in Temporal Logic},
  author    = {Maierhofer, Sebastian and Rettinger, Anna-Katharina and Mayer, Eva Charlotte and Althoff, Matthias},
  booktitle = {2020 IEEE Intelligent Vehicles Symposium (IV)},
  year      = {2020},
}

@inproceedings{maierhofer_intersection_2022,
  title     = {Formalization of Intersection Traffic Rules in Temporal Logic},
  author    = {Maierhofer, Sebastian and Moosbrugger, Paul and Althoff, Matthias},
  booktitle = {2022 IEEE Intelligent Vehicles Symposium (IV)},
  year      = {2022},
}

@article{yamaguchi_rtamt_2024,
  title   = {RTAMT: Runtime Robustness Monitors with Application to Signal Temporal Logic},
  author  = {Yamaguchi, Tomoya and Nickovic, Dejan and others},
  journal = {International Journal on Software Tools for Technology Transfer},
  year    = {2024},
}

@inproceedings{halder_lexicographic_stl_2023,
  title     = {Lexicographic Mixed-Integer Motion Planning with STL Constraints},
  author    = {Halder, Patrick and Christ, Fabian and Althoff, Matthias},
  booktitle = {2023 IEEE 26th International Conference on Intelligent Transportation Systems (ITSC)},
  year      = {2023},
}

@article{lin_smt_repair_2024,
  title={Traffic-rule-compliant trajectory repair via satisfiability modulo theories and reachability analysis},
  author={Lin, Yuanfei and Xing, Zekun and Han, Xuyuan and Althoff, Matthias},
  journal={IEEE Transactions on Robotics},
  year={2025},
  publisher={IEEE}
}

@article{ames_cbf_tac_2017,
  author  = {Ames, Aaron D. and Xu, X. and Grizzle, Jessy W. and Tabuada, Paulo},
  title   = {Control Barrier Function Based Quadratic Programs for Safety Critical Systems},
  journal = {IEEE Transactions on Automatic Control},
  year    = {2017},
  volume  = {62},
  number  = {8},
  pages   = {3861--3876}
}

@inproceedings{ames_cbf_ecc_2019,
  author    = {Ames, Aaron D. and Coogan, Samuel and Egerstedt, Magnus and Notomista, Gennaro and Sreenath, Koushil and Tabuada, Paulo},
  title     = {Control Barrier Functions: Theory and Applications},
  booktitle = {European Control Conference (ECC)},
  year      = {2019}
}

@inproceedings{liu_iterative_dhocbf_2023,
  title     = {Iterative Convex Optimization for Model Predictive Control with Discrete-Time High-Order Control Barrier Functions},
  author    = {Liu, Shuo and Zeng, Jun and Sreenath, Koushil and Belta, Calin A.},
  booktitle = {American Control Conference (ACC)},
  year      = {2023},
}

@inproceedings{allamaa_resafe_2024,
  title     = {Real-time MPC with Control Barrier Functions for Autonomous Driving using Safety Enhanced Collocation},
  author    = {Allamaa, Jean Pierre and Patrinos, Panagiotis and Ohtsuka, Toshiyuki and Son, Tong Duy},
  booktitle = {IFAC-PapersOnLine},
  year      = {2024},
}

@article{shalevshwartz_rss_2017,
  title   = {On a Formal Model of Safe and Scalable Self-driving Cars},
  author  = {Shalev-Shwartz, Shai and Shammah, Shaked and Shashua, Amnon},
  journal = {arXiv preprint arXiv:1708.06374},
  year    = {2017},
}

@article{hasuo_rss_2022,
  title   = {Responsibility-Sensitive Safety: an Introduction with an Eye to Logical Foundations and Formalization},
  author  = {Hasuo, Ichiro},
  journal = {arXiv preprint arXiv:2206.03418},
  year    = {2022},
}

@misc{nuplan_2021,
  author       = {Caesar, Holger and Kabzan, Juraj and Tan, Kok Seang and Fong, Whye Kit and Wolff, Eric and Lang, Alex and Fletcher, Luke and Beijbom, Oscar and Omari, Sammy},
  title        = {nuPlan: A Closed-loop ML-based Planning Benchmark for Autonomous Vehicles},
  year         = {2021}
}

@inproceedings{gulino_waymax_2023,
  title     = {Waymax: An Accelerated, Data-Driven Simulator for Large-Scale Autonomous Driving Research},
  author    = {Gulino, Cole and Fu, Justin and Luo, Wenjie and Tucker, George and Bronstein, Eli and Lu, Yiren and Harb, Jean and Pan, Xinlei and Wang, Yan and Chen, Xiangyu and Co-Reyes, John D. and Agarwal, Rishabh and Roelofs, Rebecca and Lu, Yao and Montali, Nico and Mougin, Paul and Yang, Zoey and White, Brandyn and Faust, Aleksandra and McAllister, Rowan and Anguelov, Dragomir and Sapp, Benjamin},
  booktitle = {Advances in Neural Information Processing Systems (NeurIPS) Datasets and Benchmarks Track},
  year      = {2023},
}

@inproceedings{ettinger_womd_iccv_2021,
  title     = {Large Scale Interactive Motion Forecasting for Autonomous Driving: The Waymo Open Motion Dataset},
  author    = {Ettinger, Scott and Cheng, Shuyang and Caine, Benjamin and Liu, Chenxi and Zhao, Hang and Pradhan, Sabeek and Chai, Yuning and Sapp, Benjamin and Qi, Charles and Zhou, Yin and Yang, Zoey and Chouard, Aurelien and Sun, Pei and Ngiam, Jiquan and Vasudevan, Vijay and McCauley, Alex and Shlens, Jonathon and Anguelov, Dragomir},
  booktitle = {IEEE/CVF International Conference on Computer Vision (ICCV)},
  year      = {2021},
}

@misc{waymax_agents_doc,
  title  = {Waymax Agents: {IDM} Route-Following Policy},
  author = {{Waymo Research}},
  year   = {2023},
  howpublished = {\url{https://github.com/waymo-research/waymax}},
  note   = {Waymax documentation, accessed 2026-02-13},
}

@article{peng_nuplan_r_2025,
  title   = {nuPlan-R: A Closed-Loop Planning Benchmark for Autonomous Driving via Reactive Multi-Agent Simulation},
  author  = {Peng, Mingxing and Yao, Ruoyu and Guo, Xusen and Ma, Jun},
  journal = {arXiv preprint arXiv:2511.10403},
  year    = {2025},
}

@article{hagedorn_nuplan_reactive_2025,
  title   = {When Planners Meet Reality: How Learned, Reactive Traffic Agents Shift nuPlan Benchmarks},
  author  = {Hagedorn, Steffen and Donkov, Luka and Distelzweig, Aron and Condurache, Alexandru P.},
  journal = {arXiv preprint arXiv:2510.14677},
  year    = {2025},
}

@inproceedings{dauner_2023_pdm,
  title={Parting with misconceptions about learning-based vehicle motion planning},
  author={Dauner, Daniel and Hallgarten, Marcel and Geiger, Andreas and Chitta, Kashyap},
  booktitle={Conference on Robot Learning},
  pages={1268--1281},
  year={2023},
  organization={PMLR}
}

@misc{plantf_2023,
  author       = {Cheng, Jie and Chen, Yingbing and Mei, Xiaodong and Yang, Bowen and Li, Bo and Liu, Ming},
  title        = {Rethinking Imitation-based Planner for Autonomous Driving},
  howpublished = {arXiv:2309.10443},
  year         = {2023}
}

@misc{pluto_2024,
  author       = {Cheng, Jie and Chen, Yingbing and Chen, Qifeng},
  title        = {PLUTO: Pushing the Limit of Imitation Learning-based Planning for Autonomous Driving},
  howpublished = {arXiv:2404.14327},
  year         = {2024}
}

@article{schwenzer_mpc_review_2021,
  title   = {Review on model predictive control: an engineering perspective},
  author  = {Schwenzer, Michael and Ay, Manuel and Bergs, Thomas and Abel, Dirk},
  journal = {The International Journal of Advanced Manufacturing Technology},
  year    = {2021},
  volume  = {117},
  pages   = {1327--1349},
}

@article{lai_lexicographic_review_2023,
  title   = {Pure and mixed lexicographic-paretian many-objective optimization: state of the art},
  author  = {Lai, Leonardo and Fiaschi, Lorenzo and Cococcioni, Marco and Deb, Kalyanmoy},
  journal = {Natural Computing},
  year    = {2023},
  volume  = {22},
  pages   = {227--242},
  doi     = {10.1007/s11047-022-09911-4}
}

@inproceedings{abernethy_2024_lexopt,
  title     = {Lexicographic Optimization: Algorithms and Stability},
  author    = {Abernethy, Jacob and others},
  booktitle = {Proceedings of Machine Learning Research},
  year      = {2024},
}

@article{tercan_2024_tlo,
  title   = {Thresholded Lexicographic Ordered Multiobjective Optimization},
  author  = {Tercan, A. and others},
  journal = {arXiv preprint arXiv:2408.13493},
  year    = {2024},
}

@article{mavrotas_2009,
  title   = {Effective implementation of the $\epsilon$-constraint method in multi-objective mathematical programming problems},
  author  = {Mavrotas, George},
  journal = {Applied Mathematics and Computation},
  year    = {2009},
  volume  = {213},
  number  = {2},
  pages   = {455--465},
}

@article{casadi_2019,
  title     = {{CasADi}: A Software Framework for Nonlinear Optimization and Optimal Control},
  author    = {Andersson, Joel A. E. and Gillis, Joris and Horn, Greg and Rawlings, James B. and Diehl, Moritz},
  journal   = {Mathematical Programming Computation},
  year      = {2019},
  volume    = {11},
  number    = {1},
  pages     = {1--36},
}

@article{ipopt_2006,
  title   = {On the Implementation of an Interior-Point Filter Line-Search Algorithm for Large-Scale Nonlinear Programming},
  author  = {W{\"a}chter, Andreas and Biegler, Lorenz T.},
  journal = {Mathematical Programming},
  year    = {2006},
  volume  = {106},
  number  = {1},
  pages   = {25--57},
}

@article{benjamini_hochberg_1995,
  title   = {Controlling the False Discovery Rate: A Practical and Powerful Approach to Multiple Testing},
  author  = {Benjamini, Yoav and Hochberg, Yosef},
  journal = {Journal of the Royal Statistical Society: Series B},
  year    = {1995},
  volume  = {57},
  number  = {1},
  pages   = {289--300},
}

@article{manheim_goodhart_2018,
  title   = {Categorizing Variants of Goodhart's Law},
  author  = {Manheim, David and Garrabrant, Scott},
  journal = {arXiv preprint arXiv:1803.04585},
  year    = {2018}
}

@inproceedings{grislain_igdrivsim_2025,
  author    = {Grislain, Clemence and Vuorio, Risto and Lu, Cong and Whiteson, Shimon},
  title     = {{IGDrivSim}: A Benchmark for the Imitation Gap in Autonomous Driving},
  booktitle = {IEEE/RSJ International Conference on Intelligent Robots and Systems (IROS)},
  year      = {2025},
  note      = {arXiv:2411.04653}
}

@misc{ang_clover_2026,
  author       = {Ang, Sining and Yang, Yuguang and Chen, Canyu and Wang, Yan},
  title        = {{CLOVER}: Closed-Loop Value Estimation and Ranking for End-to-End Autonomous Driving Planning},
  howpublished = {arXiv:2605.15120},
  year         = {2026}
}

@inproceedings{dehaan_causal_confusion_2019,
  author    = {de Haan, Pim and Jayaraman, Dinesh and Levine, Sergey},
  title     = {Causal Confusion in Imitation Learning},
  booktitle = {Advances in Neural Information Processing Systems (NeurIPS)},
  year      = {2019}
}

@inproceedings{weihs_imitation_gap_2021,
  author    = {Weihs, Luca and Jain, Unnat and Liu, Iou-Jen and Salvador, Jordi and Lazebnik, Svetlana and Kembhavi, Aniruddha and Schwing, Alexander},
  title     = {Bridging the Imitation Gap by Adaptive Insubordination},
  booktitle = {Advances in Neural Information Processing Systems (NeurIPS)},
  year      = {2021},
  note      = {arXiv:2007.12173}
}

@inproceedings{kerrigan_soft_2000,
  author    = {Kerrigan, Eric C. and Maciejowski, Jan M.},
  title     = {Soft Constraints and Exact Penalty Functions in Model Predictive Control},
  booktitle = {Proc.\ UKACC International Conference on Control (Control 2000)},
  year      = {2000},
  address   = {Cambridge, UK}
}

@article{scokaert_feasibility_1999,
  author    = {Scokaert, Pierre O. M. and Rawlings, James B.},
  title     = {Feasibility Issues in Linear Model Predictive Control},
  journal   = {AIChE Journal},
  volume    = {45},
  number    = {8},
  pages     = {1649--1659},
  year      = {1999},
  publisher = {Wiley}
}

@article{escande_hqp_2014,
  author  = {Escande, Adrien and Mansard, Nicolas and Wieber, Pierre-Brice},
  title   = {Hierarchical Quadratic Programming: Fast Online Humanoid-Robot Motion Generation},
  journal = {The International Journal of Robotics Research},
  volume  = {33},
  number  = {7},
  pages   = {1006--1028},
  year    = {2014},
  publisher = {SAGE}
}

@article{kanoun_taskpriority_2011,
  author  = {Kanoun, Oussama and Lamiraux, Florent and Wieber, Pierre-Brice},
  title   = {Kinematic Control of Redundant Manipulators: Generalizing the Task-Priority Framework to Inequality Task},
  journal = {IEEE Transactions on Robotics},
  volume  = {27},
  number  = {4},
  pages   = {785--792},
  year    = {2011}
}

\end{document}